\theoremstyle{plain}
\newtheorem{theorem}{Theorem}[section]
\newtheorem{remark}{Remark}[section]
\newtheorem*{remark*}{Remark}%[section]
\newtheorem{assumption}{Assumption}
\newtheorem{lemma}{Lemma}[section]
\newtheorem{proposition}{Proposition}[section]
\newtheorem*{discussion*}{Discussion}
\theoremstyle{definition}
\newtheorem{definition}{Definition}[section]
\newcommand{\cond}{\textbf{C}\!\!}
\newtheorem*{property*}{Property}
\numberwithin{equation}{section}
\newcommand{\rset}{\mathbb{R}}
\newcommand{\abs}[1]{\left\lvert{#1}\right\rvert}
\newcommand{\norm}[1]{\left\lVert{#1}\right\rVert}
\newcommand{\ffrac}[2]{\ensuremath{\frac{\displaystyle #1}{\displaystyle #2}}}
\newcommand{\PP}[1][]{\ifthenelse{\equal{#1}{}}{\ensuremath{\mathbb{P}}}{\ensuremath{\mathbb{P}\left( #1 \right) }}}
\newcommand{\EE}[1][]{\ifthenelse{\equal{#1}{}}{\ensuremath{\mathbb E}}{\ensuremath{{\mathbb E}\left[ #1 \right]}}}
\newcommand{\Var}[1][]{\ifthenelse{\equal{#1}{}}{\ensuremath{\matrhm{Var}}}{\ensuremath{{\mathrm{Var}}\left[ #1 \right]}}}
\newcommand{\Cov}[1][]{\ifthenelse{\equal{#1}{}}{\ensuremath{\mathrm{Cov}}}{\ensuremath{{\mathrm{Cov}}\left[ #1 \right]}}}
\newcommand{\rloo}{\widehat{\mathcal{R}}_{\mathrm{loo}}}
\DeclareMathOperator*{\argmin}{arg\,min}
\newcommand\ie{\emph{i.e.}\xspace}
\newcommand\iid{\ensuremath{\mathit{i.i.d.}}\xspace}
\def\indep{\perp\!\!\!\perp}
\newcommand{\DA}{\textsc{DA}\xspace}
\newcommand{\RERM}{\textsc{RERM}\xspace}
\newcommand{\HTL}{\textsc{HTL}\xspace}
\newcommand{\MSE}{\textsc{MSE}\xspace}
\newcommand{\VC}{\textsc{VC}\xspace}
\newcommand{\loo}{\emph{l.o.o.}\xspace}
\newcommand\eg{\emph{e.g. }\xspace}
\newcommand{\Hy}{\mathcal{H}}
\newcommand{\TR}{\mathcal{R}}
\newcommand{\ER}{\widehat{ \mathcal{R}}}
\newcommand{\risk}{\mathcal{R}}
\newcommand{\data}{\mathcal{D}}
\newcommand{\DD}{\data}
\newcommand{\Kfold}{\textrm{K-fold}}
\newcommand{\alg}{\mathcal{A}}
\newcommand{\loss}{\ell}
\DeclareMathOperator{\sign}{sign}
\icmltitlerunning{Hypothesis Transfer Learning with Surrogate Classification  Losses}
\begin{document}

\twocolumn[
\icmltitle{Hypothesis Transfer Learning with Surrogate Classification  Losses: \\ Generalization Bounds through Algorithmic Stability}

% It is OKAY to include author information, even for blind
% submissions: the style file will automatically remove it for you
% unless you've provided the [accepted] option to the icml2023
% package.

% List of affiliations: The first argument should be a (short)
% identifier you will use later to specify author affiliations
% Academic affiliations should list Department, University, City, Region, Country
% Industry affiliations should list Company, City, Region, Country

% You can specify symbols, otherwise they are numbered in order.
% Ideally, you should not use this facility. Affiliations will be numbered
% in order of appearance and this is the preferred way.
% \icmlsetsymbol{equal}{*}

\begin{icmlauthorlist}
\icmlauthor{Anass Aghbalou}{yyy}
\icmlauthor{Guillaume Staerman}{comp}
% \icmlauthor{Firstname3 Lastname3}{comp}
% \icmlauthor{Firstname4 Lastname4}{sch}
% \icmlauthor{Firstname5 Lastname5}{yyy}
% \icmlauthor{Firstname6 Lastname6}{sch,yyy,comp}
% \icmlauthor{Firstname7 Lastname7}{comp}
% %\icmlauthor{}{sch}
% \icmlauthor{Firstname8 Lastname8}{sch}
% \icmlauthor{Firstname8 Lastname8}{yyy,comp}
%\icmlauthor{}{sch}
%\icmlauthor{}{sch}
\end{icmlauthorlist}

\icmlaffiliation{yyy}{Télécom
Paris, Institut Polytechnique de Paris, LTCI, Palaiseau,
France.}
\icmlaffiliation{comp}{Université Paris-Saclay, Inria, CEA, ´
Palaiseau, 91120, France}

\icmlcorrespondingauthor{Anass Aghbalou}{anass.aghablou@telecom-paris.fr}

\icmlkeywords{Algorithmic, ICML}

\vskip 0.3in
]

\printAffiliationsAndNotice{}  
	
\bibliographystyle{icml2023}

\begin{abstract}
\textit{Hypothesis transfer learning} (\HTL) contrasts domain adaptation by allowing for a previous task leverage, named the source, into a new one, the target, without requiring access to the source data. Indeed, \HTL relies only on a hypothesis learnt from such source data, relieving the hurdle of expansive data storage and providing great practical benefits. Hence, \HTL is highly beneficial for real-world applications relying on big data. The analysis of such a method from a theoretical perspective faces multiple challenges, particularly in classification tasks. This paper deals with this problem by studying the learning theory of \HTL  through algorithmic stability, an attractive theoretical framework for machine learning algorithms analysis. In particular, we are interested in the statistical behaviour of the regularized empirical risk minimizers in the case of binary classification. Our stability analysis provides learning guarantees under mild assumptions. Consequently, we derive several complexity-free generalization bounds for essential statistical quantities like the training error, the excess risk and cross-validation estimates. These refined bounds allow understanding the benefits of transfer learning and comparing the behaviour of standard losses in different scenarios, leading to valuable insights for practitioners.

% We consider the problem of transferring some auxiliary hypotheses originating from other tasks knowledge in the context of supervised classification. Formally, we study the \emph{algorithmic stability } of regularized empirical risk minimizers in the \emph{hypothesis transfer learning} framework. Our working assumptions are realistic and
% permits to consider a wide class of classification losses. As a consequence of our stability analysis we derive many intuitive and complexity free generalization bounds for some important statistical quantities like the training error, excess risk and cross validation estimates. Such a refined bounds allows to understand the benefits of transfer learning and to compare the behavior of standard losses in different scenarios. 

\end{abstract}

%\section{Introduction}
%\label{sec:intro}
%\input{introduction}
\section{Introduction}
Traditional supervised machine learning methods share the common assumption that training data and test data are drawn from the same underlying distribution. However, this assumption is often too restrictive to hold in practice. In many real-world applications, a hypothesis is learnt and deployed in different environments that exhibit a distributional shift. A more realistic assumption is that the marginal distributions of training (\emph{source}) and testing (\emph{target}) domains are different but related. This is the framework of \emph{domain adaptation} (\DA), where the learner is provided little or no labeled data from the target domain but a large amount of data from the source domain. This problem arises in various real-world applications like natural language processing \cite{dredze2007frustratingly,ruder2019transfer}, sentiment analysis \cite{blitzer2007biographies,liu2019survey}, robotics \cite{zhang2012generalization,bousmalis2018using} and many other areas.

Several works shed light on the theory of \DA \cite{blitzer2007learning,mansour2009domain,ben2010theory,zhang2012generalization,cortes2015adaptation,zhang2019bridging} and suggest schemes that generally rely on minimizing some similarity distances between the source and the target domains. However, the theoretical analysis shows that a \DA procedure needs many unlabeled data from both domains to be efficient. Besides, even when unlabeled data are abundant, minimizing a similarity distance can be time-consuming in many scenarios.

To tackle this practical limitation, a new framework that relies only on the source hypothesis was introduced, the so-called \emph{hypothesis transfer learning} (\HTL) \cite{li2007bayesian,orabona2009,kuzborskij2013,perrot2015theoretical,kuzborskij2017fast,du2017hypothesis}.
\HTL is tailored to the scenarios where the user has no direct access to the source domain nor to the relatedness between the source and target environments. As a direct consequence, HTL does not introduce any assumptions about the similarity between
the source and target distributions. It has the advantage of not storing abundant source data in practice.

In this work, we analyze \HTL through Regularized Empirical Risk Minimization (\RERM) in the binary classification framework. Our working assumptions encompass many widely used \emph{surrogate} losses, such as the exponential loss used by several boosting algorithms like AdaBoost \cite{freund1997decision}, the logistic loss, the softplus loss, which serves as a smooth approximation of the hinge loss \cite{dugas2000incorporating}, the mean squared error (\MSE) and the squared hinge that represents the default losses for least squares/modified least squares algorithms \cite{rifkin2003regularized}. The attractive quality of these surrogate losses is that they are \emph{classification calibrated} \cite{Zhang2004,bartlett2006convexity}. In other words, they represent a convex upper bound for the classification error and minimizing the expected risk regarding a surrogate loss yields a predictor with sound accuracy.

This paper's theoretical analysis uses the notion of \emph{algorithmic stability}. Formally, assuming that one has access to a small labeled set, we derive many complexity-free generalisation bounds that depend only on the source hypothesis's quality. In particular,  such an analysis allows us to compare the behavior of different losses in different scenarios and to answer some practical questions such as: \emph{which surrogate loss is recommended when the source and target domains are related? Which surrogate loss is robust to heavy distribution shift?} 

The notion of algorithmic stability and its consequences in learning theory has received much attention since its introduction in \cite{DEvroy-79}. It allows obtaining complexity-free generalization bounds for a large class of learning algorithms such as k-nearest-neighbours \cite{DEvroy-79}, empirical risk minimizers \cite{kearns1999algorithmic},  Support Vector Machine  \cite{bousquet2002stability}, Bagging \cite{elisseeff05a}, \RERM \cite{Zhang2004,wibisono2009sufficient}, stochastic gradient descent \cite{hardt2016train},  neural networks with a simple architecture \cite{charles18a}, to name but a few. For an exhaustive review of the different notions of \emph{stability} and their consequences on the generalization risk of a learning algorithm, the reader is referred to \cite{kutin2002}.

  Only a few works derive theoretical guarantees for \RERM in the \HTL  framework and are all formalized in a regression setting. A stability analysis has been provided for the HTL algorithm in the case of RLS for regression in \citet{kuzborskij2013} limited to the least-squares loss. Later,  \citet{kuzborskij2017fast} considered the class of smooth losses and obtained statistical rates on the empirical risk, being a particular case of the stability guarantees. However, this smoothness assumption may be considered strong since it is not satisfied for hypotheses learnt from the exponential loss or vacuously satisfied for hypotheses learnt from the softplus loss. Besides, \citet{du2017hypothesis} proposed a novel algorithm to adapt the source hypothesis to the target domain. Nonetheless, the theoretical guarantees they derived are obtained with several strong assumptions, unverifiable in practice. The obtained bounds depend on many unknown parameters (for further details, see Section~\ref{sec:stability}, where all these assumptions are explicitly listed and discussed). Other theoretical results studying  \HTL outside the framework of \RERM  can be found \cite{li2007bayesian,morvant2012parsimonious,perrot2015theoretical,dhouib2018revisiting}. However,  most of these theoretical results depend on a complexity/distance measure or/and are valid on a different framework than classification. For example, \citet{perrot2015theoretical} explores the notion of algorithmic stability in  \emph{metric learning} with Lipschitz
loss functions to study the excess risk of some  algorithms. The obtained bounds are not intuitive as they depend on the  Lipschitz constant and cannot be easily extended to many usual classification losses. Furthermore, the proof techniques in the latter work are far from ours. 

%since the classification and the metric learning tasks are completely different .  
 
On the other hand, when the source is known, many theoretical guarantees can be found in the domain adaptation literature, see e.g. \citet{mansour2009domain,ben2010theory,zhang2012generalization,cortes2015adaptation} and \citet{zhang2019bridging}, among others. Their rates involve the complexity of the hypothesis class and the distance between the source and the target distribution that may be unknown in practice and drastically deteriorate the rates. 

Another related subject is \emph{meta learning}, broadly described as leveraging data from pre-existing tasks to derive algorithms or representations that yield superior results on unencountered tasks. Many theoretical works such as  \cite{Khodak2019,balcan2019,denevi19a} or \cite{Denevi2020} have studied this problem. Yet, the obtained theoretical guarantees in the latter works depend on the smoothness parameters of the loss function and the regularizers. The proof techniques from the present paper can be incorporated into the proof of the latter references to obtain more sharp and intuitive learning bounds, that is, bounds exclusively depending on the quality of the source hypothesis.
\paragraph{Contributions} In this paper, we investigate the statistical risk of the hypothesis transfer learning procedure dedicated to the binary classification task. To that end, we adopt the angle of algorithmic stability that offers an appealing theoretical framework to analyze such a method. This is the first work exploring algorithmic stability for \HTL  with the usual classification loss 
functions. In this paper, we provide a (pointwise) hypothesis stability analysis of the \HTL in the classification framework for any losses satisfying mild conditions. Furthermore, we show that our main assumptions are valid for the most popular classification losses and derive their associated constants. Based on these stability results, we investigate the statistical behavior of the generalization gap and the excess risk of the \HTL procedure. We provide an intuitive finite-sample analysis of these quantities and highlight the statistical behavior of common losses.

%We also provide a very intuitive learning bounds as they depend only on the quality of the source hypothesis. Such a refined bounds would allow to understand the behavior of different loss functions in an \HTL framework.

% To sum up, the theoretical guarantees we derive in this paper are new and they differ from the literature since they are 
% \begin{itemize}
%     \item dedicated to the classification setting 
%     \item valid with realistic assumptions and general ML losses (see Section~\ref{subsec:exhib})
%     \item obtained from stability analysis and are therefore complexity free and more general than those empirical and excess risk  analyses, \eg \citet{morvant2012parsimonious,cortes2015adaptation,kuzborskij2017fast,dhouib2018revisiting},etc.
% \end{itemize}

\section{Background and Preliminaries}\label{sec:framework}
In this section, we start by recalling the framework of Hypothesis transfer learning  and describe the concept of stability.

\subsection{Hypothesis Transfer Learning}

Considering the source and target domains, hypothesis transfer learning leverages the learnt hypothesis with the source dataset,  without having access to the raw source data or any information between source and target domains, to solve a machine learning task on the target domain. Formally, we denote by $\mathcal{Z}_S$ and  $\mathcal{Z}_T$ the source and target domains and assume that we have access  to $n \in \mathbb{N}, n\geq 1$ i.i.d. observations  $\mathcal{D}_T =Z_1,\ldots, Z_n \in \mathcal{Z}_T$ with a distribution $P_T$ lying in the target domain and a source hypothesis $h_S$ learnt from $m \in \mathbb{N}, m\geq 1$ i.i.d. observations $\mathcal{D}_S = Z_1^S,\ldots, Z_m^S \in \mathcal{Z}_S$ drawn from the source distribution $P_S$.  In the HTL framework, we do not have access to the source observations but only to the resulting source hypothesis $h_S$. It is worth noting that $n\ll m$ in many practical scenarios. In this paper,  we focus on the binary classification task. Therefore, our domains consist of a Cartesian product of a source/target covariate space $\mathcal{X}_S/\mathcal{X}_T$ and the set $\{-1, 1\}$, i.e. $\mathcal{Z}_S = \mathcal{X}_S \times \{-1, 1\}$ and $\mathcal{Z}_T = \mathcal{X}_T \times \{-1, 1\}$. In  addition, we assume that $\mathcal{X}_T\subset \mathcal{X}_S\subset \rset^d$. Consider two classes of hypotheses $\Hy_S$ and $\Hy_T$, an \HTL algorithm aims to use a source hypothesis $h_S \in \Hy_S $  learnt on $\DD_S$ to improve the performance of a classification algorithm over $\DD_T$. Precisely, it is defined as a map
\begin{align*}
\alg:\left(\mathcal{Z}_T\right)^n\times \mathcal{H}_S
&\rightarrow \mathcal{H}_T\\ 
\left(\DD_T, h_S\right) & \mapsto h_T.
\end{align*}

Throughout the paper, we assume that $h_S$ is given and fixed, and we use the shorthand notation  $\mathcal{A}(\DD_T)$ instead of $\mathcal{A}(\DD_T,h_S)$ for the sake of clarity. 

Let $\loss: \Hy_T \times \mathcal Z_T \mapsto\rset_+$ denote a loss function so that $\loss(h_T,Z)$ is the error of $h_T \in \Hy_T$ on the observation $Z=(X,Y)\in \mathcal Z_T$. In this work, we assume that $\loss(h_T,Z)=\phi\left(h_T(X)Y\right)$ for some non negative convex function $\phi$. The generalization risk of the predictor $\alg(D_T)$ is denoted by
\begin{align*}
\TR\big[\alg\left(\DD_T\right)\big]&=\EE_{Z\sim P_T}\left[ \loss\left(\alg\left(\DD_T\right),Z\right) \right]\\
&=\EE\left[ \loss\left(\alg\left(\DD_T\right),Z\right)\mid \DD_T \right].
\end{align*}

 Notice that the randomness in the latter expectation stems from the novel observation $Z$ only while the trained algorithm $\alg(\DD_T)$ is fixed. Its empirical counterpart, the \emph{training} error of $\alg\left(\DD_T\right)$  writes as % (usually $V = [n]\setminus T$)
%is % of the latter consists in 
% one picks a sub-sample $V$ and  uses the following risk estimate
\[\ER\big[\alg(\DD_T)\big]=\frac{1}{ n} \sum_{i=1}^n \loss(\alg(\DD_T),Z_i).
\]
The latter estimate is known to be optimistic since most learning algorithms are conceived to minimize the training loss. Thus, a more reliable estimate would be the \emph{deleted} estimate or the so-called leave-one-out (\loo) estimate:

\begin{equation}\label{def:risk-loo}
\rloo\big[\alg(\DD_T)\big]=
\frac{1}{ n} \sum_{i= 1}^{n}\loss\left(\alg(\DD_T^{\backslash i}) , Z_i\right),
\end{equation} 
where $\DD_T^{\backslash i}=\DD_T\setminus \left\{Z_i\right\}$ denotes the dataset $\DD_T$ with the $i$'th element removed.\\

\begin{remark}[\textsc{accelerated \loo}]
At first sight, one can notice that computing the \loo risk measure is a heavy task in practice since one needs to train the algorithm $n$ times. However, in our case, one can use the closed form formula of the \loo estimate for \textsc{RERM} algorithms derived in \citet{wang2018approximate}.
\end{remark}

\subsection{Algorithmic Stability}

%define $\DD_{T^{\setminus i}} $  
%To introduce the stability notion, one needs to construct
% Consider the following modifications of the  dataset~$\DD$, 
% \begin{align*}
% 	\Sot=(O_1,\ldots,O_{l-1},O_{l+1},\dots, O_n)\\
% 	\Scg=(O_1,\ldots,O_{l-1},O',O_{l+1},\dots, O_n).
% \end{align*}
% The dataset $\Sot$ is constructed by omitting the $l$'th element of $\DD$. The sequence $\Scg$ is obtained by replacing the $l$'th element of $\DD$ by $O'$, where $O'$ is an independent observation sampled from the distribution $P$. For the sake of simplicity, we denote  by $\alg(T)$, $\alg^{\backslash l}(T)$, $\alg^l(T)$ the predictors $\alg(\DD_T)$, $\alg(\DD_T^{\backslash l})$ and $\alg(\DD_T^l)$ respectively.

% An algorithm is said to be stable if changing some points in the training set  doesn't affect much his performance. 
In this part, we briefly recall important notions of stability that will be used in the paper. The notion of \emph{stability} was first introduced in \citet{DEvroy-79} to derive non-asymptotic guarantees for the leave-one-out estimate. Let denote by $[n]$ the set of indices $\{1, \ldots, n\}$. The algorithm $\alg$ is called stable if removing a training point $Z_i$, $i\in [n]$,  from the $\DD_T$  or replacing $Z_i$ with an independent observation $Z'$ drawn from the same distribution does not alter the risk of the output. Later, \citet{bousquet2002stability} introduced the strongest notion of stability, namely \emph{uniform stability}, an assumption used  to derive probability upper bounds for the training error and the \loo estimate \cite{bousquet2002stability,elisseeff05a,hardt2016train,bousquet2020sharper,nikita2021stability}. Equipped with the above notations, \emph{uniform} stability, also called \emph{leave-one-out} stability, can be defined as follows.

\begin{definition}\label{def:unif-stable}
	The algorithm $\alg$ is said to be   $\beta(n)$-\emph{uniformly} stable with respect to a loss function $\loss$ if, for any $i\in [n]$ and $Z\in \mathcal{Z}_T$, it holds:  % $O\in \mathcal{Z}$, it holds that
	\begin{equation*}
	\left|\loss\left(\alg(\DD_T),Z\right)-
	\loss\left(\alg(\DD_T^{\backslash i}),Z\right)\right| \leq \beta(n). 
	\end{equation*}
\end{definition}

In practice, uniform stability may be too restrictive since the
bound above must hold for all $Z$, irrespective of its marginal
distribution. While weaker, the following notion of stability
is still enough to control the leave-one-out deviations \cite{DEvroy-79,bousquet2002stability,elisseeff05a,kuzborskij2013}.
\begin{definition}\label{def:hypothesis-stable}
	The algorithm $\alg$  has a \emph{hypothesis} stability $\beta(n)$ with respect to a loss function $\loss$ if, for any $i\in [n]$, it holds:  % $O\in \mathcal{Z}$, it holds that
	\begin{equation*}
	\left\|\loss\left(\alg(\DD_T),Z\right)-
	\loss\left(\alg(\DD_T^{\backslash i}),Z\right)\right\|_1 \leq \beta(n),
	\end{equation*}
	where $\left\|X \right\|_q=\left(\EE\left[\left|X\right|^q\right]\right)^{1/q}$ is the $L_q$ norm of $X$.
\end{definition}
We now recall a direct analogue of hypothesis stability: the \emph{pointwise hypothesis stability}. The latter property is used to derive \textrm{PAC} learning bounds for the training error \cite{bousquet2002stability,elisseeff05a,charles18a}.
\begin{definition}\label{def:pointwise-hypothesis-stable}
The algorithm $\alg$  has a \emph{pointwise hypothesis } stability  $\gamma(n)$ with respect to a loss function $\loss$ if, for any $i\in [n]$, it holds:  % $O\in \mathcal{Z}$, it holds that
	\begin{equation*}
	\left\|\loss\left(\alg(\DD_T),Z_i\right)-
	\loss\left(\alg(\DD_T^{\backslash i}),Z_i\right)\right\|_1 \leq \gamma(n).
	\end{equation*}
	
\end{definition}
Note that the approach based on stability does not refer to a complexity measure like the \VC dimension or the Rademacher complexity. There is no need to prove uniform convergence,
and the generalization error (cf. Equation \ref{def:gen-gap} below) depends directly on the stability parameter. Our work aims to use the notion of algorithmic stability to derive sharper bounds for the \HTL problem. More precisely, the magnitude of the obtained bounds is directly related to the quality of $h_S$ on the target domain \big(represented by $\risk[h_S]$\big) instead of the complexity of the hypothesis class \cite{ben2010theory,zhang2012generalization,cortes2015adaptation,zhang2019bridging}.
\subsection{Working Framework}

This paper analyses hypothesis transfer learning through regularised empirical risk minimization (\RERM). In particular,  it includes the popular Regularized Least Squares (RLS) with biased regularization \citep{orabona2009} that has been analyzed in \citet{kuzborskij2013} and \citet{kuzborskij2017fast}. Formally, we consider the following algorithm $\alg$ such that:
\begin{equation}\label{def:algo-transfer}
    \alg (\DD_T, h_S)= \hat{h}(\cdot \; ;  \DD_T)+ h_S(\cdot), 
\end{equation}
where the function $\hat{h}: \mathbb{R}^d \rightarrow \mathbb{R}$ is obtained from the target set of data via the minimization problem:
\begin{align}\label{def:algo-training-transfer}
\hat{h}&= \argmin_{h \in \mathcal{H}}\frac{1}{n}\sum_{i=1}^n\phi\left(\left(h\left(X_i\right)+h_S\left(X_i\right)\right)Y_i\right)+\lambda\lVert h\rVert_k^2\nonumber\\
&=\argmin_{h \in \mathcal{H}}\ER(h+h_S)+\lambda\lVert h\rVert_k^2,
\end{align}

with the family of hypotheses $\mathcal{H}$ being a reproducing kernel Hilbert space (\textsc{RKHS}) endowed with a kernel $k$, an inner product $\langle \cdot,\cdot\rangle$ and a norm $\lVert\cdot\rVert_k$. The resulting map arising from the HTL is the sum of the source hypothesis $h_S$ and the target hypothesis $\hat h$ where $\hat h$ is learnt involving the source map.

% $k(x,y)=e^{-\frac{ ||x-y||}{\sigma}}$
 %$k(x,y)= \tanh( \tau \langle x,y \rangle ) $
It is worth noting that our analysis encompasses the least square with  biased regularization  \citep{scholkopf2001generalized,orabona2009} commonly studied in transfer learning \citep{kuzborskij2013,kuzborskij2017fast}, briefly recalled below.

\begin{remark}[\textsc{link with RLS}]\label{remark:rls-link}

The \textsc{RLS} with biased regularization is a particular case of the proposed algorithm~\ref{def:algo-transfer}. Indeed,  by choosing $k$ as the linear kernel $k(x_1,x_2)=x_1^\top x_2$ and the loss $\phi(x)=\left(1-x\right)^2$, it is equivalent to
	$$\alg= \hat{h}+h_S,$$
	with $\hat{h}(x)= \hat{u}^\top x$ and
 \begin{align}\label{def:rls-regression}
    \hat{u}&=\argmin_{u \in \rset^d}\frac{1}{n}\sum_{i=1}^{n}\left(u^\top X_i +h_S(X_i)-Y_i\right)^2+\lambda\lVert u\rVert_2^2.
 \end{align}
 Furthermore, if $h_S(x)=v^\top x$  is a linear classifier with $v\in \mathbb{R}^d$, then
 \begin{align*}
    \hat{u}&=\argmin_{u \in \rset^d}\frac{1}{n}\sum_{i=1}^{n}\left(u^\top X_i -Y_i\right)^2+\lambda\lVert u-v\rVert_2^2,
 \end{align*}
 which is the original form of  biased regularisation algorithms \cite{scholkopf2001generalized,orabona2009}. See Appendix~\ref{subsec:app:RLS} for technical details.
%=\argmin_{u \in \rset^d}\frac{1}{n}\sum_{1}^{n}\left(u^\top X_iY_i +h_S(X_i)Y_i-1\right)^2+\lambda_n\lVert g\rVert_k^2 \\
\end{remark}

\section{Stability Analysis}\label{sec:stability}

The subsequent analysis requires technical assumptions, listed below. We assume that the source hypothesis and the kernel $k$ are bounded, as stated in the following assumptions.

\begin{assumption} \label{Ass:1}The source hypothesis is bounded on the target space:  
\begin{equation*}
\norm{h_S}_\infty=\sup_{x\in \mathcal{X}_T}\abs{h_S(x)}<\infty.
\end{equation*}
\end{assumption}

\begin{assumption}\label{Ass:2} The kernel $k$ is bounded:
\begin{equation*}
    \sup_{x_1, x_2\in \mathcal{X}_T} k(x_1,x_2)\leq \kappa.
\end{equation*}
\end{assumption}
The boundness of the kernel is a common and mild assumption (see e.g. \citealp{bousquet2002stability,Zhang2004,wibisono2009sufficient}). It is satisfied by many usual kernels like the Gaussian kernel and the sigmoid kernel. Furthermore,  when $\mathcal{X}_T$ is bounded, then polynomial kernels are also bounded.

We now investigate the accuracy of the HTL proposed framework and provide general stability results under slight assumptions. Furthermore, we show that these assumptions are satisfied by most of the popular ML surrogate losses used in practice and derive precisely the associated constants involved in our theoretical results.

 % Then, using the target dataset $\DD_T$, we try to \textbf{adapt}  the hypothesis score $h_S$ by adding another score $f$ (cf. Equ($ii$)ation \ref{def:algo-training-transfer}) that depends directly on the  performance of $h_S$ \big($\ER[h_S]$\big). More details will be provided in the rest of this section. 

\subsection{Hypothesis Stability}\label{subsec:stab}
%The latter notion of stability enables to obtain many guarantees for model selection and risk estimation by cross validation methods \cite{kumar2013near,bayle2020cross,bates2021cross}.  
This section analyzes the hypothesis stability of general surrogate ML losses for the proposed HTL framework. To study the stability of Algorithm~\ref{def:algo-transfer}, we start by showing that the solution of the optimization problem~\ref{def:algo-training-transfer} lies in the sphere with a data-driven radius, as stated in the following lemma.

\begin{lemma}\label{lemma:ball}
Suppose that Assumptions~\ref{Ass:1} and~\ref{Ass:2} are satisfied. Then the solution of Equation~\eqref{def:algo-training-transfer} lies in the set 
$\left\{h\in\mathcal{\mathcal{H}}, \;  \lVert h \rVert_\infty \leq  \hat r_\lambda \right\}$ with
$$\hat r_\lambda= \kappa\sqrt{\alpha\ER\left[h_S\right]}, $$ where $\alpha= \kappa / \lambda$ \hspace{0.01cm}. % and $h'$ is the derivative of $h$.
\end{lemma}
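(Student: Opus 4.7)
The plan is to exploit the fact that $\hat h$ is a minimizer by comparing its objective value against a convenient competitor, and then to convert a bound on the RKHS norm into a sup-norm bound via the reproducing property.

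First, I would use $h \equiv 0$ as a comparator. Since $\hat h$ minimizes the regularized empirical objective in Equation~\eqref{def:algo-training-transfer}, we have
\begin{equation*}
\frac{1}{n}\sum_{i=1}^n\phi\bigl((\hat h(X_i)+h_S(X_i))Y_i\bigr)+\lambda\lVert \hat h\rVert_k^2 \;\leq\; \frac{1}{n}\sum_{i=1}^n\phi(h_S(X_i)Y_i) \;=\; \ER[h_S].
\end{equation*}
Because $\phi$ is non-negative, the empirical risk term on the left is non-negative, so discarding it yields
\begin{equation*}
\lambda\lVert \hat h\rVert_k^2 \;\leq\; \ER[h_S], \qquad \text{i.e.} \qquad \lVert \hat h\rVert_k^2 \;\leq\; \ER[h_S]/\lambda.
\end{equation*}

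Next, I would upgrade this RKHS-norm bound to a uniform bound. By the reproducing property, $\hat h(x)=\langle \hat h, k(\cdot,x)\rangle_k$, so Cauchy--Schwarz together with Assumption~\ref{Ass:2} gives
\begin{equation*}
|\hat h(x)| \;\leq\; \lVert \hat h\rVert_k \, \sqrt{k(x,x)} \;\leq\; \sqrt{\kappa}\, \lVert \hat h\rVert_k
\end{equation*}
for every $x\in\mathcal X_T$. Combining the two inequalities yields
\begin{equation*}
\lVert \hat h\rVert_\infty \;\leq\; \sqrt{\kappa}\,\sqrt{\ER[h_S]/\lambda} \;=\; \sqrt{\alpha\,\ER[h_S]},
\end{equation*}
and hence (using $\sqrt{\kappa}\le\kappa$ whenever $\kappa\ge 1$, which can be arranged without loss of generality by inflating $\kappa$) we obtain the stated $\hat r_\lambda=\kappa\sqrt{\alpha\,\ER[h_S]}$.

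There is no genuine obstacle here; the argument is a textbook RERM-in-RKHS estimate. The only subtle point is that the proof leverages Assumption~\ref{Ass:1} only implicitly (through the fact that $\ER[h_S]$ is finite, which follows from boundedness of $h_S$ and of $\phi$ on any bounded interval), while the kernel bound in Assumption~\ref{Ass:2} is used both to control the reproducing evaluation functional and to define $\alpha$. The appearance of $\ER[h_S]$ in the radius—rather than a data-independent quantity such as $\phi(0)/\lambda$—is precisely what will later allow the subsequent stability estimates to scale with the quality of the source hypothesis on the target domain.
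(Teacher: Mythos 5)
Your argument is correct and is exactly the paper's proof: compare the objective at $\hat h$ against the zero competitor to get $\lambda\lVert\hat h\rVert_k^2\leq\ER\left[h_S\right]$ by non-negativity of $\phi$, then pass to the sup-norm via the reproducing property, Cauchy--Schwarz and Assumption~\ref{Ass:2}. Note that what you actually establish (and what the paper's appendix also uses, writing $\hat r^i_\lambda=\sqrt{\alpha\ER^{\backslash i}\left[h_S\right]}$ without the prefactor) is the sharper radius $\sqrt{\alpha\ER\left[h_S\right]}$; the extra factor $\kappa$ in the lemma's statement only gives a valid, weaker bound when $\kappa\geq 1$, so your caveat about that factor is well placed.
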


\begin{proof}
    The proof is postponed in the Appendix~\ref{subsec:app:lemma3.1}. 
\end{proof}

This lemma ensures that the norm of the solution of the optimisation problem \ref{def:algo-training-transfer}  decreases when the quality of $h_S$ increases. In the rest of the paper, for a given index $i\in [n]$, we denote by $\hat r^i_\lambda= \kappa\sqrt{\alpha\ER^{\backslash i}\left[h_S\right]}$, $\ER^{\backslash i}$ the training error with the $i$'th sample removed and $\hat \rho^i_\lambda= \max\left(\hat r_\lambda,\hat r^i_\lambda\right)$.

Before stating our main theorem, we first require an additional assumption involving the empirical radius obtained in Lemma~\ref{lemma:ball}. 

\iffalse
Besides, by using the optimality condition of problem \ref{def:algo-training-transfer} and the Bregman divergence  we deduce that the stability parameter $\beta_n$ verifies (up to a constant factor)
\begin{align*}
\beta_n&\leq  \frac{1}{n}\EE\left[\phi'\left(\left(h_S(X_i)+f(X_i)\right)Y_i\right)^2\right]\\
&\leq \frac{1}{n}\EE\left[\sup_{\abs{y}\leq \hat r_\lambda} \phi'(h_S(X)Y+y)^2 \right] 
\end{align*}
and for many surrogate losses the right hand term is bounded by $\Psi\left(\risk\left[h_S\right]\right)/\left(\lambda n\right)$ for some decreasing function $\Psi$ verifying $\Psi(0)=0$ . Formally, we will show that all the losses in the Introduction verifies the following Property  .
\fi

\begin{assumption} \label{Ass:3} \label{property:derivative-expectation}
The function $\phi$ is  differentiable and convex. Furthermore, $\forall i\in[n]$, it holds:
\begin{align*}
	\EE\left[\sup_{\abs{y'},\abs{y}\leq \hat \rho^i_\lambda} \left|\phi'(h_S(X')Y'+y')\phi'(h_S(X)Y+y)\right|\right]&\\
	&\hspace{-25mm}\leq \Psi_1\left(\risk\left[h_S\right]\right),
\end{align*}
where $Z=(X,Y)$, $Z'=(X',Y')$ are two samples drawn from $P_T$ independent of $\DD^{\backslash i}$  and $\Psi_1$ is a decreasing function  verifying $\Psi_1(0)=0$.
\end{assumption}

The bound stated in the theorem below reveals the generalisation properties of the presented HTL procedure through the stability framework. 

\begin{proposition}\label{prop:stability-main}
	Suppose that Assumptions~\ref{Ass:1},~\ref{Ass:2}  and~\ref{Ass:3} are satisfied. Then the algorithm $\alg$ (cf. Equation~\eqref{def:algo-transfer}) is  hypothesis stable with parameter 
	 $$\beta(n)=\ffrac{\alpha\left(\Psi_1\left(\risk\left[h_S\right]\right)  \wedge \norm{\phi'}_\infty^2\right)}{ n}.$$
\end{proposition}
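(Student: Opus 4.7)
The plan is to follow the classical Bousquet--Elisseeff strategy, adapted to the HTL biased-regularization setting. By convexity and the mean value theorem,
$$\bigl|\phi((\hat h+h_S)(X)Y)-\phi((\hat h^{\backslash i}+h_S)(X)Y)\bigr|\leq M'\,|\hat h(X)-\hat h^{\backslash i}(X)|,$$
where $M'$ is $|\phi'|$ evaluated at some intermediate point between the two arguments and $\hat h^{\backslash i}$ denotes the minimizer on $\DD_T^{\backslash i}$. The reproducing property then gives $|\hat h(X)-\hat h^{\backslash i}(X)|\leq\sqrt{\kappa}\,\|\hat h-\hat h^{\backslash i}\|_k$, so the problem reduces to bounding the RKHS gap and then taking the expectation of the resulting product of two $\phi'$ evaluations.

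To control $\|\hat h-\hat h^{\backslash i}\|_k$, I would use the $2\lambda$-strong convexity of the regularized objective (from the $\lambda\|h\|_k^2$ term). Summing the strong-convexity inequalities at the respective minimizers of the two empirical problems makes all data terms cancel except for the removed sample, producing
$$2\lambda\,\|\hat h-\hat h^{\backslash i}\|_k^2\leq\tfrac{1}{n}\bigl(\phi((\hat h^{\backslash i}+h_S)(X_i)Y_i)-\phi((\hat h+h_S)(X_i)Y_i)\bigr).$$
Applying the MVT and the reproducing property once more on the right-hand side and dividing through by $\|\hat h-\hat h^{\backslash i}\|_k$ gives $\|\hat h-\hat h^{\backslash i}\|_k\leq\sqrt{\kappa}\,M_i/(2n\lambda)$, where $M_i$ bounds $|\phi'|$ at the removed point $Z_i$. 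Lemma~\ref{lemma:ball} confines both $\hat h$ and $\hat h^{\backslash i}$ to sup-norm balls of radius $\hat\rho^i_\lambda$, so $M_i$ and $M'$ are dominated by $\sup_{|y|\leq\hat\rho^i_\lambda}|\phi'(h_S(X_i)Y_i+y)|$ and $\sup_{|y'|\leq\hat\rho^i_\lambda}|\phi'(h_S(X)Y+y')|$ respectively.

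Assembling the pieces, the loss deviation is bounded almost surely by $\kappa/(2n\lambda)$ times the product of these two data-dependent suprema. Since both $Z_i$ and the test point $Z$ are independent of $\DD^{\backslash i}$, Assumption~\ref{Ass:3} applies verbatim and yields $\Psi_1(\risk[h_S])$ for the expected product; the alternative $\|\phi'\|_\infty^2$ follows from the trivial pointwise bound $|\phi'|\leq\|\phi'\|_\infty$. Taking the minimum of the two routes and substituting $\alpha=\kappa/\lambda$ reproduces the stated $\beta(n)$ (the factor $2$ being absorbed into the constant). The main obstacle will be the Bregman/strong-convexity step: one has to handle carefully the different normalizations appearing in $\ER$ and $\ER^{\backslash i}$, and ensure that $\phi'$ is evaluated at the \emph{training} sample so that its magnitude is controlled only through the data-driven radius $\hat\rho^i_\lambda$, which is exactly what makes Assumption~\ref{Ass:3} the right abstraction for the final expectation.
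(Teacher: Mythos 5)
Your argument is correct and is essentially the paper's own: the mean value theorem plus the reproducing property reduce the loss deviation to a product of two $\phi'$ evaluations times $\norm{\hat h-\hat h^{\backslash i}}_k$, the RKHS gap is bounded by $\sqrt{\kappa}\,\abs{\phi'(\cdot)}/(2\lambda n)$ with $\phi'$ evaluated at the removed sample, and Lemma~\ref{lemma:ball} plus Assumption~\ref{Ass:3} (applied to the independent pair $(Z_i,Z)$) then give $\Psi_1(\risk[h_S])$, with $\norm{\phi'}_\infty^2$ as the trivial fallback. The only variation is that you obtain the RKHS-gap bound by summing two strong-convexity inequalities, whereas the paper's Lemma~\ref{theo:main-lemma-stability} uses the non-negativity of the Bregman divergence of $\phi$ together with the explicit first-order optimality condition $\frac{1}{n}\sum_j\phi'(\cdot)k(X_j,\cdot)+2\lambda\hat h=0$; the two derivations are equivalent up to a factor $2$, and the normalization caveat you raise about $\ER$ versus $\ER^{\backslash i}$ applies to the paper's version as well.
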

\begin{proof}
The proof is  postponed to the Appendix~\ref{subsec:app:prop3.1}.
\end{proof}

We obtain a stability rate of order $\mathcal{O}\left(\frac{\Psi_1\left(\risk\left[h_S\right]\right)\alpha}{n} \right)$ for any  losses satisfying Assumption~\ref{Ass:3}. It naturally depends on the risk of the source classifier, where the expectation is taken on the target data distribution. Therefore, the source task directly influences the rate of the HTL classifier. The standard stability rate of \RERM without transfer learning (without source) is of order $\mathcal{O}(\alpha/n)$, see Theorem 4.3 in \citet{Zhang2004} or  Theorem 3.5 in \citet{wibisono2009sufficient}. A relevant source hypothesis
allows us to obtain faster rates than in standard \RERM. Thus, one can directly notice the benefits of using a \emph{good} source hypothesis on the stability of \RERM. 
The negative transfer, i.e. the source hypothesis has a negative effect and deteriorates the target learner, is analyzed and discussed in Section~\ref{subsec:gen}.
\\

\begin{remark}[\textsc{Related Work}]
The only existing result studying hypothesis stability in HTL is in \citet{kuzborskij2013}. However, the analysis is only in a regression framework with the mean squared error loss. The proof techniques in
\citet{kuzborskij2013} rely heavily on the closed-form formulas of the ordinary least square estimate, which does
not hold in a general setting like ours. Furthermore, we obtain equivalent (up to constants) stability rates as in \citet{kuzborskij2013}. More details are given in Section~\ref{subsec:exhib} where we explicit constants $\Psi_1$ for most of popular losses. 
\end{remark}

\noindent \textbf{Existing assumptions in \DA and  \HTL literature}
Statistical guarantees obtained in these fields generally assume that the loss function verifies a smoothness condition. For example, in \citet{mansour2009domain} and \citet{cortes2015adaptation}, their analysis supposes that $\loss$ verifies the triangle inequality, which holds only for the \MSE  and squared hinge. Moreover, the obtained upper bounds in these works depend on the complexity of $\mathcal{H}$ and some discrepancy distances between the source and target distributions $P_S$ and $P_T$, which deteriorates the statistical rates. In \citet{kuzborskij2017fast}, they suppose that the derivative of the loss is Lipschitz which is not the case for the exponential. Furthermore, even if the loss satisfies this smoothness assumption, their constants depend heavily on the smoothness parameter, and it would yield vacuous bounds in many practical situations. For example, the softplus function $\psi_s(x)=s\log(1+e^{\frac{1-x}{s}})$  with small values of $s$ serves as an approximation of the hinge loss $\max(0,1-x)$ and is $1/s$  Lipschitz. This function converges to the Hinge loss when $s\rightarrow 0$ and usual choices of $s$ are usually close to $0$. Therefore,  the Lipschitz constant of the derivative $1/s$ verifies $1/s \gg 1$, and the bounds from \citet{kuzborskij2017fast} become vacuous. Besides, \citet{du2017hypothesis} 
%\st{made several strong assumptions, like the smoothness of the true regression function and the uniform boundness from below of the distributions $P_S, P_T$. Furthermore, their obtained bounds are of order $n^{-1/d}$ making them inappropriate for high-dimensional problems.} 
made several assumptions about the true regression function of both the source and target domains. To clarify, by the true regression function, $f$, we refer to the actual model denoted by $Y=f(X)$. However, these assumptions are challenging to empirically confirm due to their reliance on the real source and target distributions, which generally remain unknown. Moreover, the theoretical guarantees achieved depend on several constants, also derived from the true distribution, that makes quantifying the bounds magnitude a complex task.

To our best knowledge, the vast majority of existing theoretical results from the \HTL literature have similar assumptions to those discussed above. However, in this work, our assumptions are flexible:  we only require the differentiability of the loss and a \textit{local} majorant of the derivative, which will make the analysis more flexible and more suited for the usual classification losses. 

To understand the  intuition behind Assumption~\ref{Ass:3} notice that, when $\risk[h_S]\to 0$, $\phi(h_S(X)Y)$ approaches the minimum then $\phi'(h_S(X)Y)$ approaches 0 (in expectation). Thus, the function $\Psi_1$ can be seen as a function that dictates the rate of convergence of the derivative to $0$ as $h_S$ approaches the optimal hypothesis. One must note that the latter assumption is verified for many loss functions, namely any loss satisfying the following inequality $\abs{\phi'(x)}\leq \Psi(\phi(x))$ for some concave loss function $\Psi$. The function $\Psi$ effectively mediates between $\phi$ and $\phi'$. As an example, in the context of Mean Squared Error (MSE) loss, it is straightforwardly observable that $|\phi'(x)|\leq \sqrt{\phi(x)}$. Thus $\phi'$ is directly linked to $\phi(x)$ via the square root function.

\begin{remark}[\textsc{score scaling}]
\textsc{RERM} for regression (cf. Equation \ref{def:rls-regression}) is equivalent to fitting a predictor on the residuals $Y_i-h_S(X_i)$. However, in the classification case, if we follow the standard approach that $h_S: \mathcal{X}\mapsto \mathcal{Y}=\{-1,1\}$ is a binary classifier \cite{mansour2009domain,cortes2015adaptation},  then latter residuals are either $1$ or $0$. Thus, this won't provide enough information for many losses to improve the training. To see this, see the example of the logistic loss and notice that $\phi(1)=\log(1+e^{-1})$ and $\phi(-1)=\log(1+e^1)$. Therefore, in the best case scenario, $\risk[h_S]=\log(1+e^{-1})$, which is far from the minimum (that is zero). To tackle this problem, we suggest taking the score learned on the source, which is more informative, especially when the loss function used to train the algorithm on the source has the same minimum as the loss used to train on the target. Note that one can also think of transforming the score, for example, if $\phi$ 
 is the logistic loss $\phi(x)=\log(1+e^{-x})$ and $h_S\in ]-1,1[$ we can use an increasing transformation function to an interval $]-C,C[$ with $C>>1$ in order to adapt to the target loss which is nearly 0 for large values $x$.  
\end{remark}

\subsection{Pointwise Hypothesis Stability}

To go further than the widely used hypothesis stability, we analyze our \HTL problem through the angle of pointwise hypothesis stability. Results presented in this part will be the cornerstone of those shown in Section~\ref{sec:gener}. To analyze the pointwise hypothesis stability of Algorithm \ref{def:algo-transfer}, we require a direct analogue of Assumption \ref{Ass:3}, involving the data-driven radius provided in Lemma~\ref{lemma:ball}.  
\begin{assumption} \label{Ass:4} 
The function $\phi$ is  differentiable and convex. Furthermore, $\forall i\in[n]$, it holds:
\begin{align*}
	\EE\left[\sup_{\abs{y'},\abs{y}\leq \hat \rho^i_\lambda} \left|\phi'(h_S(X)Y+y')\phi'(h_S(X)Y+y)\right|\right]&\\
	&\hspace{-25mm}\leq \Psi_2\left(\risk\left[h_S\right]\right).
\end{align*}
where $Z=(X,Y)$ is a sample drawn from $P_T$ independent of $\DD^{\backslash i}$  and $\Psi_2$ is a decreasing function  verifying $\Psi_2(0)=0$.
\end{assumption}

Under the latter assumption, the following proposition is obtained in a similar manner to Proposition \ref{prop:stability-main}.

\begin{proposition}\label{prop:pointwise-stability-main}
	Suppose that Assumptions~\ref{Ass:1},~\ref{Ass:2}  and~\ref{Ass:4} are satisfied. Then the algorithm $\alg$ (cf. Equation~\eqref{def:algo-transfer}) is pointwise hypothesis stable with parameter 
	 $$\gamma(n)=\ffrac{\alpha\left(\Psi_2\left(\risk\left[h_S\right]\right)  \wedge \norm{\phi'}_\infty^2\right)}{ n}.$$ 
\end{proposition}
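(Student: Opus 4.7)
The plan is to follow the blueprint of the proof of Proposition~\ref{prop:stability-main}, with the sole modification being the evaluation point of the loss. Let $\hat h$ denote the solution of~\eqref{def:algo-training-transfer} trained on $\DD_T$, and $\hat h^{\backslash i}$ the solution obtained when $Z_i$ is removed. By Lemma~\ref{lemma:ball}, $\|\hat h\|_\infty \leq \hat r_\lambda$ and $\|\hat h^{\backslash i}\|_\infty \leq \hat r_\lambda^i$, so both $\hat h(X_i)$ and $\hat h^{\backslash i}(X_i)$ lie in the interval $[-\hat\rho_\lambda^i, \hat\rho_\lambda^i]$. Since $\phi$ is differentiable (Assumption~\ref{Ass:4}), the mean value theorem gives
\begin{equation*}
\left|\loss\bigl(\alg(\DD_T), Z_i\bigr) - \loss\bigl(\alg(\DD_T^{\backslash i}), Z_i\bigr)\right|
\leq \sup_{|y|\leq \hat\rho_\lambda^i} \bigl|\phi'\bigl(h_S(X_i)Y_i + y\bigr)\bigr|\,\bigl|\hat h(X_i) - \hat h^{\backslash i}(X_i)\bigr|.
\end{equation*}

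Next, control $|\hat h(X_i) - \hat h^{\backslash i}(X_i)|$ via the Bregman divergence of the regularized objective, exactly as in the proof of Proposition~\ref{prop:stability-main}. Writing the first-order optimality conditions for both minimizers, subtracting them, taking inner product with $\hat h - \hat h^{\backslash i}$, and invoking $\lambda$-strong convexity of the regularizer together with the reproducing property (which contributes a factor $\kappa$), one obtains
\begin{equation*}
\bigl|\hat h(X_i) - \hat h^{\backslash i}(X_i)\bigr| \leq \frac{\kappa^2}{\lambda n}\,\sup_{|y'|\leq \hat\rho_\lambda^i}\bigl|\phi'\bigl(h_S(X_i)Y_i + y'\bigr)\bigr|,
\end{equation*}
i.e.\ a factor $\alpha/n$ times the same type of derivative supremum, but now evaluated at the same training index~$i$. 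Combining the two displays produces a product of two derivative terms, both evaluated at $(X_i, Y_i)$:
\begin{equation*}
\bigl|\loss\bigl(\alg(\DD_T), Z_i\bigr) - \loss\bigl(\alg(\DD_T^{\backslash i}), Z_i\bigr)\bigr| \leq \frac{\alpha}{n}\sup_{|y|,|y'|\leq \hat\rho_\lambda^i} \bigl|\phi'(h_S(X_i)Y_i + y)\,\phi'(h_S(X_i)Y_i + y')\bigr|.
\end{equation*}

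Finally, take the $L_1$-norm of both sides. Note that $Z_i$ and $\DD_T^{\backslash i}$ are independent, so Assumption~\ref{Ass:4} applies directly with $(X,Y) = (X_i, Y_i)$ and yields the bound $\alpha\,\Psi_2(\risk[h_S])/n$. The alternative bound $\alpha\|\phi'\|_\infty^2/n$ follows by trivially replacing each derivative by its supremum. Taking the minimum of the two yields $\gamma(n)$.

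The only conceptual obstacle — and the reason Assumption~\ref{Ass:4} must replace Assumption~\ref{Ass:3} — lies in the last step: here both derivative factors are evaluated at the \emph{same} random pair $(X_i, Y_i)$, so we cannot factorize the expectation into two marginal expectations as was done in Proposition~\ref{prop:stability-main}. Assumption~\ref{Ass:4} is precisely the joint version of Assumption~\ref{Ass:3} that accommodates this coupling, and the remaining computations are essentially identical to the hypothesis-stability case.
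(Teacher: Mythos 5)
Your proof is correct and follows essentially the same route as the paper: the paper likewise obtains the pointwise bound by substituting $Z_i$ for $Z$ in the key inequality from the proof of Proposition~\ref{prop:stability-main} (mean value theorem combined with the RKHS-norm control of Lemma~\ref{theo:main-lemma-stability}), then takes expectations and applies Assumption~\ref{Ass:4}, exploiting the independence of $Z_i$ and $\DD_T^{\backslash i}$ exactly as you note. The only blemish is the constant $\kappa^2/(\lambda n)$ in your intermediate display, which should be $\kappa/(\lambda n)=\alpha/n$ (one factor $\sqrt{\kappa}$ from $\norm{\cdot}_\infty\leq\sqrt{\kappa}\norm{\cdot}_k$ and one from $k(X_i,X_i)^{1/2}$); your final combined display already uses the correct constant, so this does not affect the conclusion.
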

\begin{proof}
The proof is  postponed to the Appendix~\ref{subsec:app:prop3.2}.
\end{proof}
Again, this result shows the benefits of using a good hypothesis on the pointwise hypothesis stability of \RERM. This stability result, combined with that of Proposition~\ref{prop:stability-main}, can be leveraged to propose new convergence results on the generalisation gap and the excess risk of this HTL problem for a wide class of losses, as shown in Section~\ref{sec:gener}. In the sequel, we explicitly compute the functions $\Psi_1$ and $\Psi_2$ for many widely used classification losses.

\subsection{Deriving Constants for Popular Losses}\label{subsec:exhib}

As the results of Propositions~\ref{prop:stability-main} and~\ref{prop:pointwise-stability-main} are general and stated for any losses satisfying Assumptions~\ref{Ass:3} and~\ref{Ass:4}, it is the purpose of this part to investigate our results with widespread machine learning losses. To that end, we first show that these Assumptions are satisfied for the most popular losses. Second, we derive constants involved in these two statistical rates. In particular, we focus on the five following losses:

\begin{itemize}
%\new{(boosting)}
\item Exponential: $ \phi(x)= e^{-x}$.
% \new{( logistic regression)}
\item Logistic: $ \phi(x)= \log\left(1+e^{-x}\right)$.
\item Mean Squared Error: $ \phi(x)= (1-x)^2$.
% \new{( least squares)}
\item Squared Hinge:  $ \phi(x)= \max(0,1-x)^2$.
%\new{( modified least squares)}
\item Softplus: $\phi_s(x)=s\log\left(1+e^{\frac{1-x}{s}}\right)$, for some $s>0$.
%\new{( smooth approximation of hinge loss)}
\end{itemize}

In the next proposition, we show that most of classical losses verifies  Assumptions~\ref{Ass:3}, \ref{Ass:4} and we detail their associated functions $\Psi_1$ and $\Psi_2$.

\begin{proposition}\label{prop:loss-property}
The exponential, logistic, squared hinge, \MSE and softplus losses satisfy Assumptions~\ref{Ass:3} and \ref{Ass:4} with corresponding functions $\Psi_1$  and $\Psi_2$ listed in Table \ref{table:property-hypothesis}. 
\end{proposition}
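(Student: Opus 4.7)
My plan is to verify Assumptions~\ref{Ass:3} and~\ref{Ass:4} loss by loss, following the same three-step template for each of the five losses. Convexity and $C^1$ smoothness are immediate by inspection (for the squared hinge, convexity follows from $(f^2)''=2(f')^2+2ff''\geq 0$ applied to $f=\max(0,1-\cdot)$, while $C^1$-ness at $x=1$ comes from both one-sided derivatives vanishing). The heart of the argument is to establish, for every $\phi$ on the list, a pointwise inequality $|\phi'(x)|\leq g(\phi(x))$ with a nondecreasing $g$ satisfying $g(0)=0$: equality $|\phi'|=\phi$ for the exponential; $|\phi'|\leq\phi$ for the logistic, reducing to the elementary inequality $(1+u)\log(1+u)\geq u$ at $u=e^{-x}$; the same inequality applied to $u=e^{(1-x)/s}$ yields $|\phi'_s|\leq \phi_s/s$ for the softplus; and $|\phi'|=2\sqrt{\phi}$ for both the MSE and the squared hinge by direct computation.

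Next, to handle the supremum over $|y|\leq\hat{\rho}^i_\lambda$, I would exploit monotonicity of $|\phi'|$ for the exponential, logistic, and softplus, so that $\sup_{|y|\leq\hat{\rho}^i_\lambda}|\phi'(h_S(X)Y+y)|=|\phi'(h_S(X)Y-\hat{\rho}^i_\lambda)|$, and then translate the shift back into $\phi(h_S(X)Y)$: a multiplicative factor $e^{\hat{\rho}^i_\lambda}$ for the exponential, and the additive bound $\phi(x-a)\leq\phi(x)+a$ for both the logistic and the softplus (both are $1$-Lipschitz). For the MSE and the squared hinge one simply invokes the triangle inequality to obtain $|\phi'(h_S(X)Y+y)|\leq 2\sqrt{\phi(h_S(X)Y)}+2\hat{\rho}^i_\lambda$ for every $|y|\leq\hat{\rho}^i_\lambda$. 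In each case the resulting upper bound splits into a term depending only on the fresh sample and a term depending only on $\hat{\rho}^i_\lambda$, that is, on the training data through Lemma~\ref{lemma:ball}.

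The last step is to take the expectation. The crucial input is that $(Z,Z')$ are independent of the training data, so conditioning on $\DD$ factorizes the product of $\phi'$-terms into $\risk\left[h_S\right]^2$ under Assumption~\ref{Ass:3} and $\EE\left[\phi(h_S(X)Y)^2\right]$ under Assumption~\ref{Ass:4}; Assumption~\ref{Ass:1} bounds the latter by $\|\phi\circ h_S\|_\infty\,\risk\left[h_S\right]$. The surviving $\hat{\rho}^i_\lambda$-factors are tamed by Jensen's inequality, $\EE\left[\sqrt{\ER^{\backslash i}\left[h_S\right]}\right]\leq\sqrt{\risk\left[h_S\right]}$ and $\EE\left[\ER^{\backslash i}\left[h_S\right]\right]=\risk\left[h_S\right]$. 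Collecting these polynomial expressions in $\risk[h_S]$ yields the closed forms of $\Psi_1$ and $\Psi_2$ displayed in Table~\ref{table:property-hypothesis}.

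The main obstacle is the joint presence of $\hat{\rho}^i_\lambda$ and the fresh-sample derivatives inside a single expectation: a careless pull-out requires $\hat{\rho}^i_\lambda$ to be almost surely bounded, which is only true because Assumption~\ref{Ass:1} forces $\phi\circ h_S$ to be uniformly bounded on the training data. The exponential loss is tightest here (its constant grows like $e^{\|h_S\|_\infty}$), while the softplus requires tracking the factor $1/s$ introduced by the pointwise majorant $|\phi'_s|\leq\phi_s/s$; this dependence surfaces explicitly in the corresponding row of the table, rather than being hidden inside a generic constant.
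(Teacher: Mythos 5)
Your skeleton (a pointwise majorant $|\phi'(x)|\le g(\phi(x))$, handling the supremum via monotonicity or the triangle inequality, then factorizing the expectation using the independence of the fresh samples) matches the paper's strategy, and for the \MSE and squared hinge your plan is essentially the paper's proof (you would additionally need the decoupling inequality $\EE[(A+C)(B+C)]\le 2(\EE[A]^2+\EE[C^2])$ of Lemma~\ref{lemma:expectation-mult-control} to reach the constant $8x(4\alpha+1)$, but that is routine). For the exponential, logistic and softplus losses, however, there is a genuine gap at the step where you claim the surviving $\hat\rho^i_\lambda$-factors are ``tamed by Jensen's inequality.'' After the pointwise bound, the radius survives inside the expectation as a \emph{multiplicative} factor $e^{2\hat\rho^i_\lambda}$ (your own multiplicative factor $e^{\hat\rho^i_\lambda}$ for the exponential), and Jensen runs in the wrong direction there: $x\mapsto e^{c\sqrt{x}}$ is not concave (it is convex for $x>1/(4\alpha)$), so $\EE\bigl[e^{2\hat\rho^i_\lambda}\bigr]$ cannot be upper-bounded by $e^{2\EE[\hat\rho^i_\lambda]}$. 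The paper instead enlarges $\hat\rho^i_\lambda$ to $\hat\tau^i_\lambda=\sqrt{\alpha(\ER^{\backslash i}[h_S]+M_S/n)}$ — crucially, a quantity measurable with respect to $\DD^{\backslash i}$ only, which is what legitimizes the factorization against $Z$, $Z'$ and $Z_i$ (your ``conditioning on $\DD$'' glosses over the fact that $\hat r_\lambda$ depends on $Z_i$, which is one of the arguments in Assumption~\ref{Ass:4}) — and then controls $\EE[e^{2\hat\tau^i_\lambda}]$ via a Hoeffding/sub-Gaussian moment-generating-function bound (Lemmas~\ref{lemma:exponential-sum-control} and~\ref{lemma:exponential-radius-control}), which is exactly what produces the factors $C_S e^{2\alpha x}$ in Table~\ref{table:property-hypothesis}. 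Replacing this by the almost-sure bound $\hat\tau^i_\lambda\le\sqrt{2\alpha M_S}$ (which your closing remark about a constant growing like $e^{\|h_S\|_\infty}$ suggests) would give a valid but substantially cruder constant that does not recover the table.

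A secondary discrepancy: your pointwise majorants for the logistic and softplus ($|\phi'|\le\phi$, respectively $|\phi_s'|\le\phi_s/s$, combined with $1$-Lipschitzness) would lead, after taking expectations, to functions of the form $\Psi_1(x)\asymp x^2+\alpha x$ (scaled by $1/s^2$ for the softplus), not to the entries $C_S e^{2\alpha x}(e^{\sqrt{x}}-1)^2$ of Table~\ref{table:property-hypothesis}; the paper reaches those via the different chain $|\phi'(x)|\le e^{-y}(e^{\phi(x)}-1)\wedge 1$ followed by a Jensen step exploiting the concavity of $x\mapsto\min(e^{\sqrt{x}}-1,1)$. Your route is arguably viable (and avoids the MGF machinery for these two losses), but it proves the assumptions with \emph{different} $\Psi_1,\Psi_2$ than those the proposition asserts, and your closing claim that the bounds collect into ``polynomial expressions in $\risk[h_S]$'' is inconsistent with the exponential factors actually appearing in the table.
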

%Furthermore, the constant $M_{h_S}$ appearing in the latter table is given by :
%$$ M_{h_S}=\sup_{Z\in \mathcal{Z}_T}\loss(h_S,Z).$$
%\end{proposition}
\begin{proof}
The proof is  postponed to the Appendix~\ref{subsec:app:prop3.3}.
\end{proof}

This result shows that bounds derived in Propositions~\ref{prop:stability-main} and~\ref{prop:pointwise-stability-main} are therefore valid under mild assumptions. Indeed, our results only require the kernel  and the source hypothesis to be bounded, classical in the HTL framework. Thus, we obtain the first stability result in HTL without limiting assumptions, which remains valid in a practical setting. 

As shown in Table~\ref{table:property-hypothesis}, functions $\Psi_1$ and $\Psi_2$ are linear for the square hinge and the \MSE losses. Besides, for the softplus and logistic losses, we have $\norm{\phi'}_\infty=1$ and their stability parameters capped by $\alpha/n$.
Thus, the impact of an irrelevant source hypothesis $h_S$ with large $\risk[h_S]$  remains negligible on the stability of \RERM  when using these losses. In contrast, for the exponential loss, the functions $\Psi_1$ and $\Psi_2$ are roughly exponential, and the corresponding convergence rate deteriorates quickly as $\risk[h_S]$ increases. This is indeed not surprising since a prediction in the wrong direction ($\sign(h_S(X))\neq Y$) would increase the loss $e^{-h_S(X)Y}$  exponentially fast. In the particular case of the \MSE, we obtain the same stability rate $\mathcal{O}\left(\frac{\alpha\risk[h_S]}{n}\right)$ as in the regression framework \cite{kuzborskij2013}. In the next section, we shall discuss the implications of these stability rates on the  \emph{generalization gap} \cite{hardt2016train,charles18a}, cross-validation schemes and the excess risk of Algorithm \ref{def:algo-transfer}.

\begin{table}
	\begin{center}
		\begin{tabular}{lccr}
		   \hline
		   Loss  & $\Psi_1(x)$ & $\Psi_2(x)$ \\
		   \hline\\[-0.3cm]      
		     Sq. hinge & $8x(4\alpha+1)$ & $8x(4\alpha+1)$ \\
	       \MSE  & $8x(4\alpha+1)$  & $8x(4\alpha+1)$ \\
              Exponential & $C_Sx^2e^{2\alpha x}$ & $M_{S}C_Sxe^{2\alpha x}$\\ 
              Logistic  & $C_Se^{2\alpha x}(e^{\sqrt{x}}-1)^2$&$ C_Se^{2\alpha x}(e^{\sqrt{x}}-1)$\\
              Softplus  & $C_Se^{2\alpha x}(e^{\sqrt{\frac{x}{s}}}-1)^2$ & $C_Se^{2\alpha x}(e^{\sqrt{\frac{x}{s}}}-1)$   \\
              
			\hline
		\end{tabular}

	\end{center}
		\caption{Examples of losses verifying Assumptions \ref{Ass:3}, \ref{Ass:4} and their corresponding functions. The constants $M_{S}$ and $C_{S}$ are given by $M_{S}=\sup_{z\in \mathcal{Z}_T}\loss(h_S,z)$, $C_S=\exp\left\{2 +\frac{2\alpha  M_S}{n}+\frac{4\alpha^2  M_S^2}{n-1}\right\}$.}
            \label{table:property-hypothesis}
\end{table}

\section{Generalisation Guarantees for \HTL with Surrogate Losses} \label{sec:gener}

In this part, we leverage the stability results provided in Section~\ref{sec:stability} in several statistical errors commonly used. 

%in machine learning. 

%First, we show a nonasymptotic result on the generalization gap. Second, we provide a finite-sample analysis of the excess risk.

\subsection{Generalization Gap}\label{subsec:gen}

Here we investigate the accuracy of the algorithm $\alg$ through the generalization gap. Precisely, this gap is defined as the expected error between the empirical risk and the theoretical risk of the algorithm $\alg$: 

\begin{equation*}\label{def:gen-gap}
    \mathcal{E}_{\textrm{gen}}=\abs{\EE\left[\ER\left[\alg(\DD_T)\right]-\risk\left[\alg (\DD_T)\right]\right]}.
\end{equation*}

To discuss the impact of $h_S$ on the generalization gap, it suffices to analyse the stability parameters $\beta(n)$ and $\gamma(n)$. Indeed, $\mathcal{E}_{\textrm{gen}}$ is directly linked to these quantities, as stated in the following theorem.

\begin{theorem}\label{theo:gen-gap-UB}
Suppose that $\alg$ has a hypothesis stability $\beta(n)$ and a pointwise hypothesis  stability $\gamma(n)$. Then, it holds:
$$\mathcal{E}_{\textrm{gen}}\leq \beta(n)+\gamma(n).$$
Furthermore, suppose that Assumptions~\ref{Ass:1},~\ref{Ass:2},~\ref{Ass:3} and~\ref{Ass:4} are satisfied. Thus,  $\beta(n)$ and $\gamma(n)$ are given by Propositions~\ref{prop:stability-main} and~\ref{prop:pointwise-stability-main} and  the generalization gap of $\alg$ (cf. Equation~\eqref{def:algo-transfer}) is upper-bounded as:
    $$\mathcal{E}_{\textrm{gen}}\leq \alpha\ffrac{\left(\Psi_1\left(\risk\left[h_S\right]\right)+  \Psi_2\left(\risk\left[h_S\right]\right)\right)\wedge \left(2\norm{\phi'}_\infty^2\right)}{ n}.$$
\end{theorem}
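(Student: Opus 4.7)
The plan is to establish the abstract inequality $\mathcal{E}_{\textrm{gen}} \leq \beta(n)+\gamma(n)$ via the classical Bousquet--Elisseeff renaming trick, and then to close the argument by substituting the explicit stability parameters given by Propositions~\ref{prop:stability-main} and~\ref{prop:pointwise-stability-main}.

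First, I would rewrite both expectations so they are taken against the \emph{same} evaluation point. By \iid symmetry of the training sample,
\[
\EE\!\left[\ER[\alg(\DD_T)]\right] \;=\; \frac{1}{n}\sum_{i=1}^n \EE[\loss(\alg(\DD_T),Z_i)] \;=\; \EE[\loss(\alg(\DD_T),Z_1)].
\]
Letting $Z'\sim P_T$ be independent of $\DD_T$, $\EE[\risk[\alg(\DD_T)]]=\EE[\loss(\alg(\DD_T),Z')]$. Now, since $Z_1$ and $Z'$ are \iid and independent of $\DD_T^{\backslash 1}$, the joint law of $(\DD_T,Z')$ coincides with that of $(\DD_T^{Z_1\to Z'},Z_1)$, where $\DD_T^{Z_1\to Z'}$ denotes the sample $\DD_T$ with $Z_1$ replaced by $Z'$. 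Hence
\[
\EE[\loss(\alg(\DD_T),Z')] \;=\; \EE\!\left[\loss\!\left(\alg(\DD_T^{Z_1\to Z'}),Z_1\right)\right].
\]

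Second, I would insert the leave-one-out predictor $\alg(\DD_T^{\backslash 1})$ as a pivot and apply the triangle inequality:
\begin{align*}
\mathcal{E}_{\textrm{gen}}
&\;\leq\; \EE\!\left|\loss(\alg(\DD_T),Z_1)-\loss(\alg(\DD_T^{\backslash 1}),Z_1)\right| \\
&\quad+\; \EE\!\left|\loss\!\left(\alg(\DD_T^{Z_1\to Z'}),Z_1\right)-\loss(\alg(\DD_T^{\backslash 1}),Z_1)\right|.
\end{align*}
The first summand is bounded by $\gamma(n)$ by Definition~\ref{def:pointwise-hypothesis-stable}. The second summand is bounded by $\beta(n)$ from Definition~\ref{def:hypothesis-stable}: $\DD_T^{Z_1\to Z'}$ is an \iid sample of size $n$ from $P_T$, its leave-one-out variant (removing the first element, namely $Z'$) is exactly $\DD_T^{\backslash 1}$, and the test point $Z_1$ is independent of $\DD_T^{Z_1\to Z'}$, so the hypothesis stability bound applies verbatim. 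Combining the two summands yields the announced inequality.

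Third, for the quantitative statement I would simply insert the stability constants from Propositions~\ref{prop:stability-main} and~\ref{prop:pointwise-stability-main}, giving
\[
\mathcal{E}_{\textrm{gen}} \;\leq\; \ffrac{\alpha\bigl(\Psi_1(\risk[h_S])\wedge\norm{\phi'}_\infty^2\bigr) + \alpha\bigl(\Psi_2(\risk[h_S])\wedge\norm{\phi'}_\infty^2\bigr)}{n},
\]
and then use the elementary inequality $(a\wedge c)+(b\wedge c)\leq (a+b)\wedge 2c$ to arrive at the form stated in the theorem. There is no serious technical obstacle; the only delicate point is the relabeling identity linking $(\DD_T,Z')$ and $(\DD_T^{Z_1\to Z'},Z_1)$, which must be justified carefully since $\alg$ may depend on the order of its arguments (assumed to be symmetric, as is standard for \RERM).
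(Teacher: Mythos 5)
Your proof is correct and follows essentially the same route as the paper: pivot on the leave-one-out predictor evaluated at $Z_1$, bound one term by pointwise hypothesis stability and the other by hypothesis stability after an i.i.d.\ relabeling, then plug in Propositions~\ref{prop:stability-main} and~\ref{prop:pointwise-stability-main} using $(a\wedge c)+(b\wedge c)\leq (a+b)\wedge 2c$. If anything, your handling of the $\beta(n)$ contribution via the identity $(\DD_T,Z')\eqd(\DD_T^{Z_1\to Z'},Z_1)$ is more explicit than the paper's displayed computation, which silently substitutes $\alg(\DD_T^{\backslash i})$ for $\alg(\DD_T)$ inside the risk term.
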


%The latter proposition yields immediately a consistent upper bound for the generalisation gap of Algorithm~\ref{def:algo-transfer}. 
%For the proof it suffices to combine Propositions \ref{prop:stability-main}, \ref{prop:pointwise-stability-main} and  \ref{prop:gen-gap-UB}.

\begin{proof}
The proof is  postponed to the Appendix~\ref{subsec:app:gen-gap-UB}.
\end{proof}

When the source hypothesis is relevant, the risk  $\risk[h_S]$ is close to zero so that $e^{\risk[h_S]}-1\approx \risk[h_S] $ and $e^{\alpha\risk[h_S]}\approx 1 $.  Equipped with Table~\ref{table:property-hypothesis}, this theorem yields the following upper bounds for $\mathcal{E}_{\textrm{gen}}$:
\begin{itemize}
    \item \MSE, Sq. hinge: $\mathcal{E}_{\textrm{gen}}=\mathcal{O}\left(\frac{\alpha\risk[h_S]}{n}\right)$.
    \item Logistic:  $\mathcal{E}_{\textrm{gen}}=\mathcal{O}\left(\alpha\frac{\sqrt{\risk[h_S]}\wedge 2}{n}\right)$.
    \item Softplus:
    $\mathcal{E}_{\textrm{gen}}=\mathcal{O}\left(\alpha\frac{\left(\sqrt{\risk[h_S]/s}\right)\wedge 2}{n}\right)$.
    \item Exponential: $\mathcal{E}_{\textrm{gen}}=\mathcal{O}\left(\frac{\alpha M_{S}\risk[h_S]}{n}\right)$. % where $M_{S}=\sup_{z\in \mathcal{Z}_T}\loss(h_S,z)$.
\end{itemize}

Thus, if $\risk[h_S]$ is small, the exponential, the squared hinge and the \MSE losses have the fastest generalization gap rate. Therefore, our analysis suggests that the user should privilege using the latter losses if one disposes of a good hypothesis $h_S$. 

%We discuss the robustness of Algorithm \ref{def:algo-transfer}  to \emph{negative transfer} with different loss functions.

\noindent \textbf{Negative learning}
 The phenomenon of negative transfer occurs when the hypothesis $h_S$ learned from the source domain has a detrimental effect on the target learner. In such a case, training without using $h_S$ on the target domain would yield a better learner. We refer the reader to \citet{weiss2016survey} and \citet{wang2019characterizing} for further details about this topic. For the softplus and the logistic losses, the generalization gap remains bounded by $\mathcal{O}(\alpha/n)$ even if $\risk[h_S]\to \infty$. As a consequence, Algorithm~\ref{def:algo-transfer} with the sofplus and logistic losses is robust to negative learning since the generalization gap still achieves the same rate of convergence $\mathcal{O}(\alpha/n)$ as a standard RERM algorithm with no source information  \ie $h_S=0$ (see \eg \citealp{Zhang2004,wibisono2009sufficient}). Finally, we must highlight that one should avoid using the exponential loss when the source and target domains are unrelated due to the presence of the term $e^{\alpha\risk[h_S]}$ in the corresponding upper bound.

%\ag{J'hésite entre mettre la prochaine discussion ici ou carrément la déplacer à la fin de la section 3.3 et la mixer avec le paragraphe que je dois clarifier. Je la laisse ici pour le moment regarde déjà si c'est bien rédiger et tu me dira si c'est clair si on la laisse ici ou on déplace} 
\begin{remark}[\textsc{cross validation procedures}]\label{rk:crossval}
The notion of stability has many attractive qualities. In particular, it yields complexity-free bounds for cross-validation methods. (see \eg \citealp{bousquet2002stability,kumar2013near,celisse2018theoretical}). For example, one can easily show that $$\EE\left[\left|\rloo\left[\alg\left(\DD_T\right)\right]-\risk\left[\alg\left(\DD_T\right)\right]\right|\right] \leq \beta(n).$$
    Proposition \ref{prop:stability-main} shows that the quality of risk estimation with \loo depends directly on the quality of the source predictor $h_S$. Note that the same conclusion holds for model selection with \loo cross-validation: Given a family of source hypotheses, the quality of the model selection procedure depends directly on the quality of the provided learners independently of the complexity of $\mathcal{H}_T$. Besides, using the same proof techniques, we can show that Algorithm \ref{def:algo-transfer} is $L_2$ stable with stability parameter depending on $\Psi\left(\risk\left[h_S\right]\right)$. $L_2$ stability is similar to hypothesis stability, where the $L_1$ moment is replaced by the $L_2$ moment in Definition \ref{def:hypothesis-stable}. The latter notion allows obtaining theoretical guarantees regarding the \Kfold\ and the \loo schemes. It also derives asymptotic confidence intervals for cross-validation procedures in risk estimation and model selection \cite{bayle2020cross,austern2020asymptotics}. In our particular case, Proposition \ref{prop:stability-main} implies that the tightness of the confidence intervals of cross-validation methods depends only on the quality of $h_S$.% measured by $\risk\left[h_S\right]$. 
\end{remark}

\subsection{Excess Risk}
In this section we  analyse the excess risk of Algorithm \ref{def:algo-transfer} defined as:
$$ \mathcal{E}_{\textrm{ex}}=\EE\left[\risk\left[\alg\right] -\risk\left[h^*+h_S\right]\right],$$

where $h^*=\argmin_{h\in \mathcal{H}}\risk\left[h_S+h\right]$. To this end, we start by showing  that $\mathcal{E}_{\textrm{ex}}$ depends on the upper bounds of the \textit{ (pointwise) hypothesis stability}
and the regularization parameter $\lambda$. Further, we derive precise finite-sample rates for the surrogate losses introduced in Section~\ref{subsec:exhib}.
\begin{theorem}\label{theo:custom-stability-surrogate}
	Suppose that $\norm{h^*}_k < \infty$. Then, the excess risk of algorithm \ref{def:algo-transfer} verifies,
	$$ \mathcal{E}_{\textrm{ex}}\leq \gamma(n)+\beta(n)+\lambda\norm{h^*}_k^2.$$

Making $\lambda$ varying with the sample size $n$, we obtain various consistent bounds for different losses. In the sequel, we assume that $\kappa\leq 1$ and $M_S\leq 1$ to avoid notional burden. When $\phi$ is either the \MSE or the squared hinge  and  $\lambda=\sqrt{\ffrac{\risk[h_S]}{\sqrt{n}}}$, it holds: 
$$\mathcal{E}_{\textrm{ex}}\leq \mathcal{O}\left( \sqrt{\ffrac{\risk[h_S]}{\sqrt{n}}} \right).$$

Furthermore, if $\phi$ is the exponential loss and  $n\geq \frac{M_S^2\ln(n)^2} {\risk[h_S]}$, picking  $\lambda=4\ffrac{\sqrt{\risk[h_S]}\wedge 1}{\ln(n)}$ yields:
$$\mathcal{E}_{\textrm{ex}}\leq \mathcal{O}\left(  \ffrac{\sqrt{\risk[h_S]}\wedge 1}{\ln(n)} \right),$$
otherwise picking $\lambda=\frac{\ln(n)^2}{\sqrt{n}}$  gives:
$$\mathcal{E}_{\textrm{ex}}\leq \mathcal{O}\left(  \frac{\ln(n)^2}{\sqrt{n}} \right).$$
Suppose that the function $\phi$ is the logistic loss or the softplus. Then the choice $\lambda=\ffrac{1}{\sqrt{n}}$ yields:
$$\mathcal{E}_{\textrm{ex}}\leq \mathcal{O}\left(  \frac{1}{\sqrt{n}} \right).$$
% \new{new: Moreover if $\risk[h_S]\leq 1/\sqrt{n}$ and $n\geq 3$ then the choice $\ffrac{-2}{\sqrt{n}\ln\left(\risk[h_S]\right)}$ gives :
% $$\mathcal{E}_{\textrm{ex}}\leq \mathcal{O}\left(  \ffrac{-1}{\sqrt{n}\ln\left(\risk[h_S]\right)} \right).$$}
\end{theorem}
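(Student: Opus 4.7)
The plan is a standard excess-risk decomposition followed by inserting the closed-form stability parameters from Table~\ref{table:property-hypothesis} and tuning the regularization $\lambda$ per loss. Writing $\alg(\DD_T) = \hat h + h_S$, I would first decompose
\begin{align*}
\risk[\hat h + h_S] - \risk[h^*+h_S]
&= \bigl(\risk[\hat h + h_S] - \ER[\hat h + h_S]\bigr)\\
&\quad + \bigl(\ER[\hat h + h_S] - \ER[h^*+h_S]\bigr)\\
&\quad + \bigl(\ER[h^*+h_S] - \risk[h^*+h_S]\bigr).
\end{align*}
Taking expectations, the third term vanishes since $h^*+h_S$ is deterministic (independent of $\DD_T$). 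The second term is non-positive up to the regularization penalty: using that $\hat h$ minimizes $\ER[\,\cdot\,+h_S]+\lambda\lVert\,\cdot\,\rVert_k^2$ against the admissible competitor $h^*$, it is bounded by $\lambda\lVert h^*\rVert_k^2-\lambda\lVert\hat h\rVert_k^2 \leq \lambda\lVert h^*\rVert_k^2$. Finally, the first term in expectation is precisely the generalization gap of $\alg$, which Theorem~\ref{theo:gen-gap-UB} bounds by $\beta(n)+\gamma(n)$. Summing the three pieces yields the general inequality $\mathcal{E}_{\mathrm{ex}} \leq \gamma(n)+\beta(n)+\lambda\lVert h^*\rVert_k^2$.

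For the explicit rates, I would substitute the expressions of $\Psi_1,\Psi_2$ listed in Table~\ref{table:property-hypothesis} and balance the stability contribution against the penalty. For \MSE and squared hinge, $\Psi_i(\risk[h_S]) = \mathcal{O}(\alpha\risk[h_S])$, so $\beta(n)+\gamma(n) = \mathcal{O}(\alpha^2\risk[h_S]/n) = \mathcal{O}(\risk[h_S]/(\lambda^2 n))$ under $\kappa\leq 1$; choosing $\lambda = \sqrt{\risk[h_S]/\sqrt n}$ sets the stability piece to $\mathcal{O}(1/\sqrt n)$ and the penalty to $\mathcal{O}(\sqrt{\risk[h_S]/\sqrt n})$, the latter dominating. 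For logistic and softplus, $\lVert\phi'\rVert_\infty=1$ caps $\beta(n)+\gamma(n)$ at $\mathcal{O}(\alpha/n) = \mathcal{O}(1/(\lambda n))$, and the choice $\lambda = 1/\sqrt n$ balances both pieces at $\mathcal{O}(1/\sqrt n)$.

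The delicate case is the exponential loss, whose $\Psi_1,\Psi_2$ contain the blow-up factor $e^{2\alpha\risk[h_S]}$, forcing a careful, essentially logarithmic tuning of $\lambda$. I would split into two regimes controlled by the threshold $n \geq M_S^2\ln(n)^2/\risk[h_S]$. In the main regime, picking $\lambda = 4(\sqrt{\risk[h_S]}\wedge 1)/\ln n$ and exploiting $\kappa\leq 1$ gives $2\alpha\risk[h_S]\leq \ln(n)/2$, hence $e^{2\alpha\risk[h_S]}\leq \sqrt n$; plugging back makes the stability contribution $\mathcal{O}(\sqrt{\risk[h_S]}\ln(n)/\sqrt n)$, dominated by the penalty $\lambda\lVert h^*\rVert_k^2 = \mathcal{O}((\sqrt{\risk[h_S]}\wedge 1)/\ln n)$. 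In the complementary regime, $\risk[h_S]$ is so small that the safer choice $\lambda = \ln(n)^2/\sqrt n$ still yields $2\alpha\risk[h_S] = o(1)$, so both $C_S$ and the exponential factor remain $\mathcal{O}(1)$; the stability piece becomes negligible and $\lambda\lVert h^*\rVert_k^2 = \mathcal{O}(\ln(n)^2/\sqrt n)$ dominates. The principal obstacle throughout is precisely this exponential blow-up: all the tuning amounts to keeping $\alpha\risk[h_S]$ logarithmic in $n$ so that the resulting bound remains non-vacuous, which is what dictates the unusual scaling $\lambda\asymp 1/\ln n$ rather than the polynomial balancing used for the other losses.
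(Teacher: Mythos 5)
Your proof is correct and follows essentially the same route as the paper: an oracle decomposition whose stochastic term is controlled by the generalization gap of Theorem~\ref{theo:gen-gap-UB} and whose approximation term is $\lambda\norm{h^*}_k^2$, followed by the same per-loss tuning of $\lambda$ (including the observation that $e^{2\alpha\risk[h_S]}\leq\sqrt{n}$ under the stated choice for the exponential loss). The only difference is cosmetic: you compare $\hat h$ directly against $h^*$ in the empirical objective, whereas the paper pivots through the population regularized minimizer $h_\lambda=\argmin_{h\in\mathcal{H}}\risk[h_S+h]+\lambda\norm{h}_k^2$; both yield the identical bound.
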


% \begin{proof}
% The proof is deferred to the Appendix.
% \end{proof}
In particular, Theorem~\ref{theo:custom-stability-surrogate} yields the consistency of \RERM. Furthermore,  the Remark~\ref{rk:crossval} regarding the generalization gap still holds for the excess risk. First, when $\risk[h_S]$ is small, Algorithm \ref{def:algo-training-transfer} with \MSE or squared hinge would have the fastest convergence rate.
Second, when  $\risk[h_S]$ is large compared to the sample size $n$, then the safest option is to use the logistic or the softplus losses with $\lambda=\frac{1}{\sqrt{n}}$. Note that, if $\risk[h_S]$ is small  an improved convergence rate \big($1/\sqrt{-n\ln\left(\risk[h_s]\right)}$\big)  can be achieved  for the latter losses  (see Appendix \ref{subsec:excess-risk}  for further details).
Finally, Algorithm~\ref{def:algo-transfer} with the exponential loss is likely to suffer from negative learning. Indeed, if $\risk[h_S]$ is large, one needs a large amount of data to ensure the non-triviality of the rate $\risk[h_S]/\ln(n)$. It is worth noting that the rate of convergence with the exponential loss is naturally logarithmic even without a source hypothesis; see, for instance, Corollary 4.1 and Theorem 4.4 in \citet{Zhang2004}. To conclude, using a good source hypothesis improves convergence rates of \RERM  compared to those derived without transfer \cite{Zhang2004}. 

%\ag{Je pense qu'il faut ajouter une remarque ici  du type on peut estimer $\risk[h_S]$ par $\ER[h_S]$}

\begin{remark}[\textsc{on the universal consistency}]
If we assume that the kernel $k$ is non-polynomial, $h_S$ is continuous and the distribution of $X\in \mathcal{X}_T$ is \emph{regular} (see \eg Definition 4.2 in \citealp{Zhang2004}). Then, one can use any \emph{universal approximation theorem}  (see for instance Theorem 4.1 in \citealp{Zhang2004}) to obtain 
$$h^*=\argmin_{h \in \mathcal{H}}\risk\left[h_S+h\right]=\argmin_{h \in \mathcal{L}(\mathcal{X}_T, \mathbb{R})}\risk\left[h_S+h\right],$$
where $\mathcal{L}(\mathcal{X}_T, \mathbb{R})$ is the space of real-valued functions defined on $\mathcal{X}_T$. The universal consistency of $\alg$ follows immediately from Theorem~\ref{theo:custom-stability-surrogate}. Further, all the losses presented in this paper are \emph{classification calibrated} \citep{bartlett2006convexity} meaning that:
$$\argmin_{h \in \mathcal{L}(\mathcal{X}_T, \mathbb{R})}\risk\left[h\right]=\argmin_{h \in \mathcal{L}(\mathcal{X}_T, \mathbb{R})}\risk^{0\text-1}\left[h\right], $$

where $\risk^{0\text-1}\left[h\right]=P_T(\sign\left(h(X)\right)\neq Y)$ is the usual classification accuracy. Thus, minimizing the excess risk would likely yield a classifier with good accuracy.
\end{remark}

% The latter bound suggest that the excess risk depends on the source via $\beta(n)$, $\gamma(n)$.
% \begin{lemma}
% Suppose that one of the two following assumption holds :
% \begin{itemize}
%     \item The kernel $k$ is non polynomial and $h_S$ is continuous.
%     \item the source hypothesis $h_S$ belongs to $\mathcal{G}$ : $h_s\in \mathcal{G}$.
% \end{itemize}
% Then one has ,
% $$ \lambda \lVert h^*_S- h^*_{S,\lambda}\rVert=\lambda \lVert h^*-h^*_\lambda\rVert.$$

% Where $h^*=\argmin_{h\in \mathcal{H}_T}\risk[h]$ and $h_\lambda^*=\argmin_{h\in \mathcal{H}_T}\risk[h]+\lambda \norm{h}_k^2$ are independent of $h_S$.
% \end{lemma}
% In the next section, by upper bounding $\gamma(n)$ for different losses, we will discuss the impact of $h_S$ on the excess risk of  Algorithm \ref{def:algo-transfer} with different cost functions.

\section{Numerical experiments}
We illustrate our analysis by providing some results using simulated data that aim to underscore the robustness of each loss to negative learning scenarios. The experiment is conducted as follows. A source domain is considered with random variables $(X_S,Y_S)\in \mathbb{R}^2\times \{-1,1\}$, where the positive and negative classes are respectively drawn from two multivariate $t$-distributions  $\mathcal{T}((r,0),3I_2,2.5)$ and $\mathcal{T}((-r,0),3I_2,2.5)$. We train a linear classifier $h_S$ on a source dataset of size $10000$ using the \textrm{SVM} algorithm.

To emphasize the impact of negative learning on each loss, we generate a smaller target dataset of size $100$. The distributions for positive and negative classes are given by $\mathcal{T}(((r+d)cos(\theta),(r+d)sin(\theta)),I_2,2.5)$ and $\mathcal{T}((-(r+d)cos(\theta),-(r+d)sin(\theta)),I_2,2.5)$, respectively. For different values of $\theta$, the target risk $\risk\left[\hat h+h_s\right]$ of the analyzed RERM  algorithm (with $\lambda=1$) trained on the small size dataset  is estimated using a test set of size $10000$.

It is important to note that when $\theta=0$, it corresponds to the scenario of positive learning since the decision boundaries of both domains are similar. On the other hand, the case where $\theta=\pi$ corresponds to negative learning since the true decision functions of the source and the target domain are pointing to opposite directions.

Figure \ref{fig:negative_learning} presents the median true risk of the HTL algorithm (cf. Equation \ref{def:algo-training-transfer}) as a function of $\theta$ for $(r,d)=(5,5)$ computed over $1000$ simulations. The parameter $s$ of the softplus loss is set to $0.1$. Consistent with our theoretical analysis, the softplus and logistic functions exhibit significant robustness to negative transfer. 
 \begin{figure}
     \centering
     \includegraphics[scale=0.4]{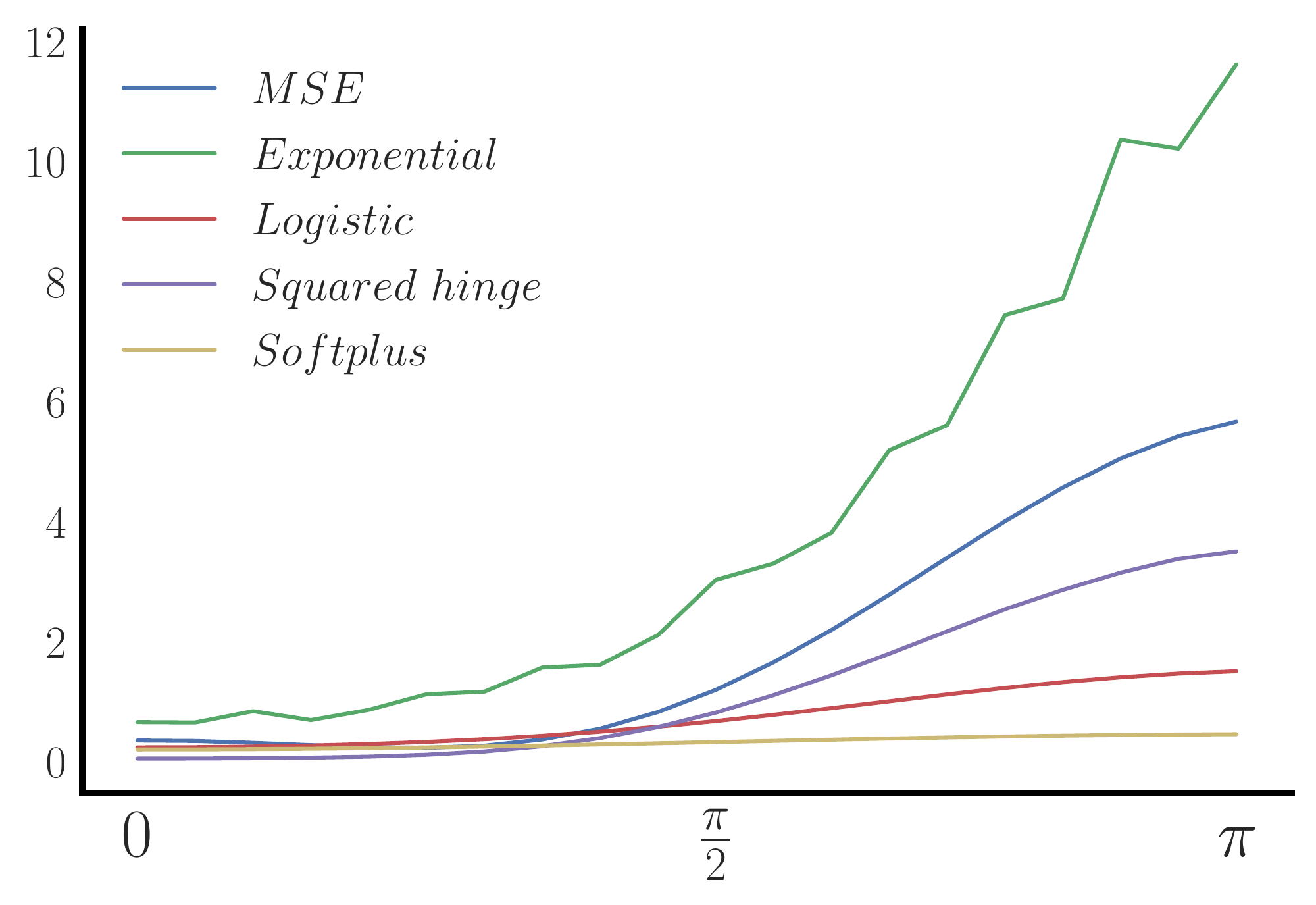}
     \caption{ Target risk of Algorithm \ref{def:algo-transfer} as a function of $\theta$.}
     \label{fig:negative_learning}
 \end{figure}

\section{Conclusion}

In this paper, we study hypothesis transfer learning through the angle of Algorithmic Stability. Following the work of \citet{kuzborskij2013}, where hypothesis stability is shown for the \MSE in the regression setting, we derive similar hypothesis stability rates in classification with general losses under slight assumptions. Furthermore, we show that our assumptions are satisfied for the most popular machine learning losses, making our work valuable for practitioners. Moreover, we leverage our stability results to provide finite-sample analysis on the generalization gap and the excess risk. We show that \HTL framework is efficient and explicit (fast) rates for these popular losses. Our theoretical analysis will help practitioners better understand the benefits of \HTL and give insight into the loss choices.

The proposed work is general and may fit with many other domains.
Future work may involve our analysis for different Machine Learning tasks where transfer learning procedures can be beneficial such as robust learning \citep{shafahi2020adversarially,laforgue2021generalization,staerman2021ot}, anomaly detection \citep{andrews2016transfer,chandola2009anomaly, staerman2020area,staerman2022functional}, speech \cite{campi2021machine,campi2023ataxic},
automatic language generation 
\citep{staerman2021pseudo,golovanov2019large},  knowledge distillation \citep{cho2019efficacy}, events-based modelling \cite{staerman2022fadin}, fairness \cite{colombo2022learning}
or general neural-networks based tasks \citep{colombo2022beyond,picothalfspace,darrin2023unsupervised}.
\bibliography{biblio_crossval}

\begin{thebibliography}{65}
\providecommand{\natexlab}[1]{#1}
\providecommand{\url}[1]{\texttt{#1}}
\expandafter\ifx\csname urlstyle\endcsname\relax
  \providecommand{\doi}[1]{doi: #1}\else
  \providecommand{\doi}{doi: \begingroup \urlstyle{rm}\Url}\fi

\bibitem[Andrews et~al.(2016)Andrews, Tanay, Morton, and
  Griffin]{andrews2016transfer}
Andrews, J., Tanay, T., Morton, E.~J., and Griffin, L.~D.
\newblock Transfer representation-learning for anomaly detection.
\newblock JMLR, 2016.

\bibitem[Austern \& Zhou(2020)Austern and Zhou]{austern2020asymptotics}
Austern, M. and Zhou, W.
\newblock Asymptotics of cross-validation.
\newblock \emph{arXiv preprint arXiv:2001.11111}, 2020.

\bibitem[Balcan et~al.(2019)Balcan, Khodak, and Talwalkar]{balcan2019}
Balcan, M.-F., Khodak, M., and Talwalkar, A.
\newblock Provable guarantees for gradient-based meta-learning.
\newblock In Chaudhuri, K. and Salakhutdinov, R. (eds.), \emph{Proceedings of
  the 36th International Conference on Machine Learning}, volume~97 of
  \emph{Proceedings of Machine Learning Research}, pp.\  424--433. PMLR, 09--15
  Jun 2019.

\bibitem[Bartlett et~al.(2006)Bartlett, Jordan, and
  McAuliffe]{bartlett2006convexity}
Bartlett, P.~L., Jordan, M.~I., and McAuliffe, J.~D.
\newblock Convexity, classification, and risk bounds.
\newblock \emph{Journal of the American Statistical Association}, 101\penalty0
  (473):\penalty0 138--156, 2006.

\bibitem[Bayle et~al.(2020)Bayle, Bayle, Janson, and Mackey]{bayle2020cross}
Bayle, P., Bayle, A., Janson, L., and Mackey, L.
\newblock Cross-validation confidence intervals for test error.
\newblock \emph{arXiv preprint arXiv:2007.12671}, 2020.

\bibitem[Ben-David et~al.(2010)Ben-David, Blitzer, Crammer, Kulesza, Pereira,
  and Vaughan]{ben2010theory}
Ben-David, S., Blitzer, J., Crammer, K., Kulesza, A., Pereira, F., and Vaughan,
  J.~W.
\newblock A theory of learning from different domains.
\newblock \emph{Machine learning}, 79\penalty0 (1):\penalty0 151--175, 2010.

\bibitem[Blitzer et~al.(2007{\natexlab{a}})Blitzer, Crammer, Kulesza, Pereira,
  and Wortman]{blitzer2007learning}
Blitzer, J., Crammer, K., Kulesza, A., Pereira, F., and Wortman, J.
\newblock Learning bounds for domain adaptation.
\newblock \emph{Advances in neural information processing systems}, 20,
  2007{\natexlab{a}}.

\bibitem[Blitzer et~al.(2007{\natexlab{b}})Blitzer, Dredze, and
  Pereira]{blitzer2007biographies}
Blitzer, J., Dredze, M., and Pereira, F.
\newblock Biographies, bollywood, boom-boxes and blenders: Domain adaptation
  for sentiment classification.
\newblock In \emph{Proceedings of the 45th annual meeting of the association of
  computational linguistics}, pp.\  440--447, 2007{\natexlab{b}}.

\bibitem[Bousmalis et~al.(2018)Bousmalis, Irpan, Wohlhart, Bai, Kelcey,
  Kalakrishnan, Downs, Ibarz, Pastor, Konolige, et~al.]{bousmalis2018using}
Bousmalis, K., Irpan, A., Wohlhart, P., Bai, Y., Kelcey, M., Kalakrishnan, M.,
  Downs, L., Ibarz, J., Pastor, P., Konolige, K., et~al.
\newblock Using simulation and domain adaptation to improve efficiency of deep
  robotic grasping.
\newblock In \emph{2018 IEEE international conference on robotics and
  automation (ICRA)}, pp.\  4243--4250. IEEE, 2018.

\bibitem[Bousquet \& Elisseeff(2002)Bousquet and
  Elisseeff]{bousquet2002stability}
Bousquet, O. and Elisseeff, A.
\newblock Stability and generalization.
\newblock \emph{The Journal of Machine Learning Research}, 2:\penalty0
  499--526, 2002.

\bibitem[Bousquet et~al.(2020)Bousquet, Klochkov, and
  Zhivotovskiy]{bousquet2020sharper}
Bousquet, O., Klochkov, Y., and Zhivotovskiy, N.
\newblock Sharper bounds for uniformly stable algorithms.
\newblock In \emph{Conference on Learning Theory}, pp.\  610--626. PMLR, 2020.

\bibitem[Campi et~al.(2021)Campi, Peters, Azzaoui, and
  Matsui]{campi2021machine}
Campi, M., Peters, G.~W., Azzaoui, N., and Matsui, T.
\newblock Machine learning mitigants for speech based cyber risk.
\newblock \emph{IEEE Access}, 9:\penalty0 136831--136860, 2021.

\bibitem[Campi et~al.(2023)Campi, Peters, and Toczydlowska]{campi2023ataxic}
Campi, M., Peters, G.~W., and Toczydlowska, D.
\newblock Ataxic speech disorders and parkinson’s disease diagnostics via
  stochastic embedding of empirical mode decomposition.
\newblock \emph{Plos one}, 18\penalty0 (4):\penalty0 e0284667, 2023.

\bibitem[Celisse \& Mary-Huard(2018)Celisse and
  Mary-Huard]{celisse2018theoretical}
Celisse, A. and Mary-Huard, T.
\newblock Theoretical analysis of cross-validation for estimating the risk of
  the k-nearest neighbor classifier.
\newblock \emph{The Journal of Machine Learning Research}, 19\penalty0
  (1):\penalty0 2373--2426, 2018.

\bibitem[Chandola et~al.(2009)Chandola, Banerjee, and
  Kumar]{chandola2009anomaly}
Chandola, V., Banerjee, A., and Kumar, V.
\newblock Anomaly detection: A survey.
\newblock \emph{ACM computing surveys (CSUR)}, 41\penalty0 (3):\penalty0 1--58,
  2009.

\bibitem[Charles \& Papailiopoulos(2018)Charles and Papailiopoulos]{charles18a}
Charles, Z. and Papailiopoulos, D.
\newblock Stability and generalization of learning algorithms that converge to
  global optima.
\newblock In Dy, J. and Krause, A. (eds.), \emph{Proceedings of the 35th
  International Conference on Machine Learning}, volume~80 of \emph{Proceedings
  of Machine Learning Research}, pp.\  745--754. PMLR, 10--15 Jul 2018.

\bibitem[Cho \& Hariharan(2019)Cho and Hariharan]{cho2019efficacy}
Cho, J.~H. and Hariharan, B.
\newblock On the efficacy of knowledge distillation.
\newblock In \emph{Proceedings of the IEEE/CVF international conference on
  computer vision}, pp.\  4794--4802, 2019.

\bibitem[Colombo et~al.(2022{\natexlab{a}})Colombo, Dadalto, Staerman, Noiry,
  and Piantanida]{colombo2022beyond}
Colombo, P., Dadalto, E., Staerman, G., Noiry, N., and Piantanida, P.
\newblock Beyond mahalanobis distance for textual ood detection.
\newblock \emph{Advances in Neural Information Processing Systems},
  35:\penalty0 17744--17759, 2022{\natexlab{a}}.

\bibitem[Colombo et~al.(2022{\natexlab{b}})Colombo, Staerman, Noiry, and
  Piantanida]{colombo2022learning}
Colombo, P., Staerman, G., Noiry, N., and Piantanida, P.
\newblock Learning disentangled textual representations via statistical
  measures of similarity.
\newblock \emph{arXiv preprint arXiv:2205.03589}, 2022{\natexlab{b}}.

\bibitem[Cortes et~al.(2015)Cortes, Mohri, and
  Mu{\~n}oz~Medina]{cortes2015adaptation}
Cortes, C., Mohri, M., and Mu{\~n}oz~Medina, A.
\newblock Adaptation algorithm and theory based on generalized discrepancy.
\newblock In \emph{Proceedings of the 21th ACM SIGKDD International Conference
  on Knowledge Discovery and Data Mining}, pp.\  169--178, 2015.

\bibitem[Darrin et~al.(2023)Darrin, Staerman, Gomes, Cheung, Piantanida, and
  Colombo]{darrin2023unsupervised}
Darrin, M., Staerman, G., Gomes, E. D.~C., Cheung, J.~C., Piantanida, P., and
  Colombo, P.
\newblock Unsupervised layer-wise score aggregation for textual ood detection.
\newblock \emph{arXiv preprint arXiv:2302.09852}, 2023.

\bibitem[Denevi et~al.(2019)Denevi, Ciliberto, Grazzi, and Pontil]{denevi19a}
Denevi, G., Ciliberto, C., Grazzi, R., and Pontil, M.
\newblock Learning-to-learn stochastic gradient descent with biased
  regularization.
\newblock In Chaudhuri, K. and Salakhutdinov, R. (eds.), \emph{Proceedings of
  the 36th International Conference on Machine Learning}, volume~97 of
  \emph{Proceedings of Machine Learning Research}, pp.\  1566--1575. PMLR,
  09--15 Jun 2019.

\bibitem[Denevi et~al.(2020)Denevi, Pontil, and Ciliberto]{Denevi2020}
Denevi, G., Pontil, M., and Ciliberto, C.
\newblock The advantage of conditional meta-learning for biased regularization
  and fine tuning.
\newblock In Larochelle, H., Ranzato, M., Hadsell, R., Balcan, M., and Lin, H.
  (eds.), \emph{Advances in Neural Information Processing Systems}, volume~33,
  pp.\  964--974. Curran Associates, Inc., 2020.

\bibitem[Devroye \& Wagner(1979)Devroye and Wagner]{DEvroy-79}
Devroye, L. and Wagner, T.
\newblock Distribution-free performance bounds for potential function rules.
\newblock \emph{IEEE Transactions on Information Theory}, 25\penalty0
  (5):\penalty0 601--604, 1979.
\newblock \doi{10.1109/TIT.1979.1056087}.

\bibitem[Dhouib \& Redko(2018)Dhouib and Redko]{dhouib2018revisiting}
Dhouib, S. and Redko, I.
\newblock Revisiting similarity learning for domain adaptation.
\newblock \emph{Advances in Neural Information Processing Systems}, 31, 2018.

\bibitem[Dredze et~al.(2007)Dredze, Blitzer, Talukdar, Ganchev, Gra{\c{c}}a,
  and Pereira]{dredze2007frustratingly}
Dredze, M., Blitzer, J., Talukdar, P.~P., Ganchev, K., Gra{\c{c}}a, J.~V., and
  Pereira, F.
\newblock Frustratingly hard domain adaptation for dependency parsing.
\newblock 2007.

\bibitem[Du et~al.(2017)Du, Koushik, Singh, and P{\'o}czos]{du2017hypothesis}
Du, S.~S., Koushik, J., Singh, A., and P{\'o}czos, B.
\newblock Hypothesis transfer learning via transformation functions.
\newblock \emph{Advances in neural information processing systems}, 30, 2017.

\bibitem[Dugas et~al.(2000)Dugas, Bengio, B{\'e}lisle, Nadeau, and
  Garcia]{dugas2000incorporating}
Dugas, C., Bengio, Y., B{\'e}lisle, F., Nadeau, C., and Garcia, R.
\newblock Incorporating second-order functional knowledge for better option
  pricing.
\newblock \emph{Advances in neural information processing systems}, 13, 2000.

\bibitem[Elisseeff et~al.(2005)Elisseeff, Evgeniou, and Pontil]{elisseeff05a}
Elisseeff, A., Evgeniou, T., and Pontil, M.
\newblock Stability of randomized learning algorithms.
\newblock \emph{Journal of Machine Learning Research}, 6\penalty0 (3):\penalty0
  55--79, 2005.

\bibitem[Freund \& Schapire(1997)Freund and Schapire]{freund1997decision}
Freund, Y. and Schapire, R.~E.
\newblock A decision-theoretic generalization of on-line learning and an
  application to boosting.
\newblock \emph{Journal of computer and system sciences}, 55\penalty0
  (1):\penalty0 119--139, 1997.

\bibitem[Golovanov et~al.(2019)Golovanov, Kurbanov, Nikolenko, Truskovskyi,
  Tselousov, and Wolf]{golovanov2019large}
Golovanov, S., Kurbanov, R., Nikolenko, S., Truskovskyi, K., Tselousov, A., and
  Wolf, T.
\newblock Large-scale transfer learning for natural language generation.
\newblock In \emph{Proceedings of the 57th Annual Meeting of the Association
  for Computational Linguistics}, pp.\  6053--6058, 2019.

\bibitem[Hardt et~al.(2016)Hardt, Recht, and Singer]{hardt2016train}
Hardt, M., Recht, B., and Singer, Y.
\newblock Train faster, generalize better: Stability of stochastic gradient
  descent.
\newblock In \emph{International conference on machine learning}, pp.\
  1225--1234. PMLR, 2016.

\bibitem[Kearns \& Ron(1999)Kearns and Ron]{kearns1999algorithmic}
Kearns, M. and Ron, D.
\newblock Algorithmic stability and sanity-check bounds for leave-one-out
  cross-validation.
\newblock \emph{Neural computation}, 11\penalty0 (6):\penalty0 1427--1453,
  1999.

\bibitem[Khodak et~al.(2019)Khodak, Balcan, and Talwalkar]{Khodak2019}
Khodak, M., Balcan, M.-F.~F., and Talwalkar, A.~S.
\newblock Adaptive gradient-based meta-learning methods.
\newblock In Wallach, H., Larochelle, H., Beygelzimer, A., d\textquotesingle
  Alch\'{e}-Buc, F., Fox, E., and Garnett, R. (eds.), \emph{Advances in Neural
  Information Processing Systems}, volume~32. Curran Associates, Inc., 2019.

\bibitem[Klochkov \& Zhivotovskiy(2021)Klochkov and
  Zhivotovskiy]{nikita2021stability}
Klochkov, Y. and Zhivotovskiy, N.
\newblock Stability and deviation optimal risk bounds with convergence rate $ o
  (1/n) $.
\newblock \emph{Advances in Neural Information Processing Systems},
  34:\penalty0 5065--5076, 2021.

\bibitem[Kumar et~al.(2013)Kumar, Lokshtanov, Vassilvitskii, and
  Vattani]{kumar2013near}
Kumar, R., Lokshtanov, D., Vassilvitskii, S., and Vattani, A.
\newblock Near-optimal bounds for cross-validation via loss stability.
\newblock In \emph{International Conference on Machine Learning}, pp.\  27--35.
  PMLR, 2013.

\bibitem[Kutin \& Niyogi(2002)Kutin and Niyogi]{kutin2002}
Kutin, S. and Niyogi, P.
\newblock Almost-everywhere algorithmic stability and generalization error.
\newblock In \emph{Proceedings of the Eighteenth Conference on Uncertainty in
  Artificial Intelligence}, UAI'02, pp.\  275–282, San Francisco, CA, USA,
  2002. Morgan Kaufmann Publishers Inc.
\newblock ISBN 1558608974.

\bibitem[Kuzborskij \& Orabona(2013)Kuzborskij and Orabona]{kuzborskij2013}
Kuzborskij, I. and Orabona, F.
\newblock Stability and hypothesis transfer learning.
\newblock In \emph{International Conference on Machine Learning}, pp.\
  942--950. PMLR, 2013.

\bibitem[Kuzborskij \& Orabona(2017)Kuzborskij and Orabona]{kuzborskij2017fast}
Kuzborskij, I. and Orabona, F.
\newblock Fast rates by transferring from auxiliary hypotheses.
\newblock \emph{Machine Learning}, 106\penalty0 (2):\penalty0 171--195, 2017.

\bibitem[Laforgue et~al.(2021)Laforgue, Staerman, and
  Cl{\'e}men{\c{c}}on]{laforgue2021generalization}
Laforgue, P., Staerman, G., and Cl{\'e}men{\c{c}}on, S.
\newblock Generalization bounds in the presence of outliers: a median-of-means
  study.
\newblock In \emph{International Conference on Machine Learning}, pp.\
  5937--5947. PMLR, 2021.

\bibitem[Li \& Bilmes(2007)Li and Bilmes]{li2007bayesian}
Li, X. and Bilmes, J.
\newblock A bayesian divergence prior for classiffier adaptation.
\newblock In \emph{Artificial Intelligence and Statistics}, pp.\  275--282.
  PMLR, 2007.

\bibitem[Liu et~al.(2019)Liu, Shi, Ji, and Jia]{liu2019survey}
Liu, R., Shi, Y., Ji, C., and Jia, M.
\newblock A survey of sentiment analysis based on transfer learning.
\newblock \emph{IEEE Access}, 7:\penalty0 85401--85412, 2019.

\bibitem[Mansour et~al.(2009)Mansour, Mohri, and
  Rostamizadeh]{mansour2009domain}
Mansour, Y., Mohri, M., and Rostamizadeh, A.
\newblock Domain adaptation: Learning bounds and algorithms.
\newblock \emph{arXiv preprint arXiv:0902.3430}, 2009.

\bibitem[Morvant et~al.(2012)Morvant, Habrard, and
  Ayache]{morvant2012parsimonious}
Morvant, E., Habrard, A., and Ayache, S.
\newblock Parsimonious unsupervised and semi-supervised domain adaptation with
  good similarity functions.
\newblock \emph{Knowledge and Information Systems}, 33\penalty0 (2):\penalty0
  309--349, 2012.

\bibitem[Nesterov et~al.(2018)]{nesterov2018lectures}
Nesterov, Y. et~al.
\newblock \emph{Lectures on convex optimization}, volume 137.
\newblock Springer, 2018.

\bibitem[Orabona et~al.(2009)Orabona, Castellini, Caputo, Fiorilla, and
  Sandini]{orabona2009}
Orabona, F., Castellini, C., Caputo, B., Fiorilla, A.~E., and Sandini, G.
\newblock Model adaptation with least-squares svm for adaptive hand
  prosthetics.
\newblock In \emph{2009 IEEE International Conference on Robotics and
  Automation}, pp.\  2897--2903. IEEE, 2009.

\bibitem[Perrot \& Habrard(2015)Perrot and Habrard]{perrot2015theoretical}
Perrot, M. and Habrard, A.
\newblock A theoretical analysis of metric hypothesis transfer learning.
\newblock In \emph{International Conference on Machine Learning}, pp.\
  1708--1717. PMLR, 2015.

\bibitem[Picot et~al.(2023)Picot, Granese, Staerman, Romanelli, Messina,
  Piantanida, and Colombo]{picothalfspace}
Picot, M., Granese, F., Staerman, G., Romanelli, M., Messina, F., Piantanida,
  P., and Colombo, P.
\newblock A halfspace-mass depth-based method for adversarial attack detection.
\newblock \emph{Transactions on Machine Learning Research}, 2023.

\bibitem[Rifkin et~al.(2003)Rifkin, Yeo, Poggio, et~al.]{rifkin2003regularized}
Rifkin, R., Yeo, G., Poggio, T., et~al.
\newblock Regularized least-squares classification.
\newblock \emph{Nato Science Series Sub Series III Computer and Systems
  Sciences}, 190:\penalty0 131--154, 2003.

\bibitem[Ruder et~al.(2019)Ruder, Peters, Swayamdipta, and
  Wolf]{ruder2019transfer}
Ruder, S., Peters, M.~E., Swayamdipta, S., and Wolf, T.
\newblock Transfer learning in natural language processing.
\newblock In \emph{Proceedings of the 2019 conference of the North American
  chapter of the association for computational linguistics: Tutorials}, pp.\
  15--18, 2019.

\bibitem[Sch{\"o}lkopf et~al.(2001)Sch{\"o}lkopf, Herbrich, and
  Smola]{scholkopf2001generalized}
Sch{\"o}lkopf, B., Herbrich, R., and Smola, A.~J.
\newblock A generalized representer theorem.
\newblock In \emph{International conference on computational learning theory},
  pp.\  416--426. Springer, 2001.

\bibitem[Shafahi et~al.(2020)Shafahi, Saadatpanah, Zhu, Ghiasi, Studer, Jacobs,
  and Goldstein]{shafahi2020adversarially}
Shafahi, A., Saadatpanah, P., Zhu, C., Ghiasi, A., Studer, C., Jacobs, D., and
  Goldstein, T.
\newblock Adversarially robust transfer learning.
\newblock In \emph{8th International Conference on Learning Representations
  (ICLR 2020)(virtual)}. International Conference on Learning Representations,
  2020.

\bibitem[Staerman et~al.(2020)Staerman, Mozharovskyi, Cl{\'e}men,
  et~al.]{staerman2020area}
Staerman, G., Mozharovskyi, P., Cl{\'e}men, S., et~al.
\newblock The area of the convex hull of sampled curves: a robust functional
  statistical depth measure.
\newblock In \emph{International Conference on Artificial Intelligence and
  Statistics}, pp.\  570--579. PMLR, 2020.

\bibitem[Staerman et~al.(2021{\natexlab{a}})Staerman, Laforgue, Mozharovskyi,
  and d’Alch{\'e} Buc]{staerman2021ot}
Staerman, G., Laforgue, P., Mozharovskyi, P., and d’Alch{\'e} Buc, F.
\newblock When ot meets mom: Robust estimation of wasserstein distance.
\newblock In \emph{International Conference on Artificial Intelligence and
  Statistics}, pp.\  136--144. PMLR, 2021{\natexlab{a}}.

\bibitem[Staerman et~al.(2021{\natexlab{b}})Staerman, Mozharovskyi, Colombo,
  Cl{\'e}men{\c{c}}on, and d'Alch{\'e} Buc]{staerman2021pseudo}
Staerman, G., Mozharovskyi, P., Colombo, P., Cl{\'e}men{\c{c}}on, S., and
  d'Alch{\'e} Buc, F.
\newblock A pseudo-metric between probability distributions based on
  depth-trimmed regions.
\newblock \emph{arXiv preprint arXiv:2103.12711}, 2021{\natexlab{b}}.

\bibitem[Staerman et~al.(2022{\natexlab{a}})Staerman, Adjakossa, Mozharovskyi,
  Hofer, Sen~Gupta, and Cl{\'e}men{\c{c}}on]{staerman2022functional}
Staerman, G., Adjakossa, E., Mozharovskyi, P., Hofer, V., Sen~Gupta, J., and
  Cl{\'e}men{\c{c}}on, S.
\newblock Functional anomaly detection: a benchmark study.
\newblock \emph{International Journal of Data Science and Analytics}, pp.\
  1--17, 2022{\natexlab{a}}.

\bibitem[Staerman et~al.(2022{\natexlab{b}})Staerman, Allain, Gramfort, and
  Moreau]{staerman2022fadin}
Staerman, G., Allain, C., Gramfort, A., and Moreau, T.
\newblock Fadin: Fast discretized inference for hawkes processes with general
  parametric kernels.
\newblock \emph{arXiv preprint arXiv:2210.04635}, 2022{\natexlab{b}}.

\bibitem[Vershynin(2018)]{vershynin_2018}
Vershynin, R.
\newblock \emph{High-Dimensional Probability: An Introduction with Applications
  in Data Science}.
\newblock Cambridge Series in Statistical and Probabilistic Mathematics.
  Cambridge University Press, 2018.
\newblock \doi{10.1017/9781108231596}.

\bibitem[Wang et~al.(2018)Wang, Zhou, Lu, Maleki, and
  Mirrokni]{wang2018approximate}
Wang, S., Zhou, W., Lu, H., Maleki, A., and Mirrokni, V.
\newblock Approximate leave-one-out for fast parameter tuning in high
  dimensions.
\newblock In Dy, J. and Krause, A. (eds.), \emph{Proceedings of the 35th
  International Conference on Machine Learning}, volume~80 of \emph{Proceedings
  of Machine Learning Research}, pp.\  5228--5237. PMLR, 10--15 Jul 2018.

\bibitem[Wang et~al.(2019)Wang, Dai, P{\'o}czos, and
  Carbonell]{wang2019characterizing}
Wang, Z., Dai, Z., P{\'o}czos, B., and Carbonell, J.
\newblock Characterizing and avoiding negative transfer.
\newblock In \emph{Proceedings of the IEEE/CVF conference on computer vision
  and pattern recognition}, pp.\  11293--11302, 2019.

\bibitem[Weiss et~al.(2016)Weiss, Khoshgoftaar, and Wang]{weiss2016survey}
Weiss, K., Khoshgoftaar, T.~M., and Wang, D.
\newblock A survey of transfer learning.
\newblock \emph{Journal of Big data}, 3\penalty0 (1):\penalty0 1--40, 2016.

\bibitem[Wibisono et~al.(2009)Wibisono, Rosasco, and
  Poggio]{wibisono2009sufficient}
Wibisono, A., Rosasco, L., and Poggio, T.
\newblock Sufficient conditions for uniform stability of regularization
  algorithms.
\newblock \emph{Computer Science and Artificial Intelligence Laboratory
  Technical Report, MIT-CSAIL-TR-2009-060}, 2009.

\bibitem[Zhang et~al.(2012)Zhang, Zhang, and Ye]{zhang2012generalization}
Zhang, C., Zhang, L., and Ye, J.
\newblock Generalization bounds for domain adaptation.
\newblock \emph{Advances in neural information processing systems}, 25, 2012.

\bibitem[Zhang(2004)]{Zhang2004}
Zhang, T.
\newblock Statistical behavior and consistency of classification methods based
  on convex risk minimization.
\newblock \emph{The Annals of Statistics}, 32\penalty0 (1):\penalty0 56--85,
  2004.

\bibitem[Zhang et~al.(2019)Zhang, Liu, Long, and Jordan]{zhang2019bridging}
Zhang, Y., Liu, T., Long, M., and Jordan, M.
\newblock Bridging theory and algorithm for domain adaptation.
\newblock In \emph{International Conference on Machine Learning}, pp.\
  7404--7413. PMLR, 2019.

\end{thebibliography}

\newpage
\appendix
\onecolumn
\section{Preliminary Results}\label{sec:intermediate}
In this section, we show some useful technical lemmas used in the subsequent proofs.

\begin{lemma}\label{lemma:expectation-mult-control}
	Suppose that $X,Y,Z$ are three mutually independent random variables such that $E(X)=E(Y)$. Then it holds:
	$$\EE\left[(X+Z)(Y+Z)\right]\leq 2\left(\EE\left[X\right]^2+\EE[Z^2]\right).$$
\end{lemma}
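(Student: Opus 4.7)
The plan is a direct computation followed by two standard inequalities, no ingenuity required. First I would expand the product inside the expectation as
\[
\EE[(X+Z)(Y+Z)] = \EE[XY] + \EE[XZ] + \EE[YZ] + \EE[Z^2].
\]
Since $X,Y,Z$ are mutually independent, the three mixed expectations factor, giving $\EE[XY]=\EE[X]\EE[Y]$, $\EE[XZ]=\EE[X]\EE[Z]$, and $\EE[YZ]=\EE[Y]\EE[Z]$. Using the hypothesis $\EE[X]=\EE[Y]$, the identity collapses to
\[
\EE[(X+Z)(Y+Z)] = \EE[X]^2 + 2\,\EE[X]\EE[Z] + \EE[Z^2].
\]

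Next I would bound the cross term $2\EE[X]\EE[Z]$. By the elementary inequality $2ab \leq a^2+b^2$ applied to $a=\EE[X]$, $b=\EE[Z]$, we obtain $2\EE[X]\EE[Z] \leq \EE[X]^2 + \EE[Z]^2$. Then Jensen's inequality gives $\EE[Z]^2 \leq \EE[Z^2]$, so that
\[
2\,\EE[X]\EE[Z] \leq \EE[X]^2 + \EE[Z^2].
\]
Plugging this back in yields
\[
\EE[(X+Z)(Y+Z)] \leq 2\EE[X]^2 + 2\EE[Z^2],
\]
which is the claim. There is no real obstacle here: the whole argument is an expansion plus $2ab\le a^2+b^2$ plus Jensen, and the symmetry $\EE[X]=\EE[Y]$ is used precisely to make the cross term $\EE[XY]$ collapse to $\EE[X]^2$ (matching the form of the right-hand side).
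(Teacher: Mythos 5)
Your proof is correct and follows essentially the same route as the paper's: expand the product, factor the cross terms by mutual independence, use $\EE[X]=\EE[Y]$ to collapse to $\EE[X]^2+2\EE[X]\EE[Z]+\EE[Z^2]$, then apply $2ab\le a^2+b^2$ together with $\EE[Z]^2\le\EE[Z^2]$. No differences worth noting.
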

\begin{proof}
Since $X,Y,Z$  are mutually independent one has the following identities,
\begin{align*}
\EE\left[(X+Z)(Y+Z)\right]
&=\EE[X]\EE[Y]+\EE[X]\EE[Z]+\EE[Z]\EE[Y]+\EE[Z^2]\\
%\left(E[X]=E[Y]\right)
&=\EE[X]^2+2\EE[X]\EE[Z]+\EE[Z^2].
\end{align*}
Now, noticing that $\left(E[Z]^2\leq E[Z^2]\right)$ we get:
\begin{align*}
    \EE[X]^2+2\EE[X]\EE[Z]+\EE[Z^2] &\leq 2\EE[X]^2+\EE[Z]^2+\EE[Z^2] 
    \\& \leq 2\left(\EE\left[X\right]^2+\EE[Z^2]\right),
\end{align*}

which is the desired result.
\end{proof}
In the sequel, we shall provide an upper bound for the exponential of $\hat \tau^i_\lambda$ defined as:

\begin{equation}\label{def:ball-prime}
\hat \tau^i_\lambda=\sqrt{\alpha \left(\ER^{\backslash i}[h_S]+\frac{M_S}{n}\right)},
\end{equation}

with $M_S=\sup_{z\in\mathcal{Z}_T}\loss(h_S,z)$ and 
\begin{equation}\label{def:ER-pop}
  \ER^{\backslash i}[h]= \frac{1}{n-1}\sum_{j\neq i}\loss(h,Z_j),
\end{equation}
the training error of a hypothesis $h$ with the $i$'th datum removed. The quantity  $\hat \tau^i_\lambda$  will serve as an upper bound of $\hat \rho^i_\lambda=\max(\hat r_\lambda, \hat r^i_\lambda)$ independent of the observation $Z_i\in \DD_T$. Indeed,  by definition:

$$\hat \tau^i_\lambda \geq  \hat r^i_\lambda=\sqrt{\alpha \left(\ER^{\backslash i}[h_S]\right)}.$$
Moreover, it holds:

$$ \ER\left[h\right]\leq \ER^{\backslash i}\left[h\right]+\frac{\loss(h,Z_i)}{n} \leq \ER^{\backslash i}\left[h\right]+\frac{M_S}{n},$$
so that $\hat \tau^i_\lambda  \geq \hat r^i_\lambda$. Thus, we have $\hat \tau^i_\lambda  \geq \hat \rho^i_\lambda$.

\begin{lemma}\label{lemma:exponential-sum-control}
	Let $W_1,W_2,\dots, W_n$ be a sequence  of \iid random variables bounded by $C>0$. Then one has
	$$\EE\left[e^{\hat \mu}\right] \leq  e^{\mu+\frac{C^2 }{n}},$$
	where $\mu=\EE\left[W_1\right]$ and $\hat \mu= \frac{1}{n} \sum_{i=1}^{n}W_i$.
\end{lemma}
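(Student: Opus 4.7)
The plan is to exploit the independence of the $W_i$'s to factor the exponential moment, and then bound each factor via a second-order expansion.

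First, by independence of the $W_i$'s,
\begin{equation*}
\EE\left[e^{\hat\mu}\right] = \EE\left[\prod_{i=1}^n e^{W_i/n}\right] = \prod_{i=1}^n \EE\left[e^{W_i/n}\right] = \left(\EE\left[e^{W_1/n}\right]\right)^n,
\end{equation*}
so it suffices to prove that $\EE[e^{W_1/n}] \leq \exp\!\bigl(\mu/n + C^2/n^2\bigr)$.

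The key step is a pointwise second-order bound for the exponential. For $|x|\leq 1$ one has the elementary inequality $e^x \leq 1 + x + x^2$. Since $|W_1/n|\leq C/n \leq 1$ (which we may assume, as the lemma is invoked for $n$ large enough that this holds, as will be the case in the applications), we get
\begin{equation*}
e^{W_1/n} \leq 1 + \frac{W_1}{n} + \frac{W_1^2}{n^2}.
\end{equation*}
Taking expectations and using $|W_1|\leq C$ to bound $\EE[W_1^2]\leq C^2$,
\begin{equation*}
\EE\left[e^{W_1/n}\right] \leq 1 + \frac{\mu}{n} + \frac{C^2}{n^2} \leq \exp\!\left(\frac{\mu}{n} + \frac{C^2}{n^2}\right),
\end{equation*}
where the last step uses $1+t\leq e^t$.

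Raising this to the $n$-th power and using the factorization above yields $\EE[e^{\hat\mu}] \leq \exp(\mu + C^2/n)$, which is the desired conclusion. The only potential subtlety is the regime where $C/n > 1$; in that case one would replace the inequality $e^x \leq 1+x+x^2$ by a Hoeffding-type bound $\EE[e^{s(W_1-\mu)}]\leq e^{s^2 C^2/2}$ applied with $s=1/n$, which gives a constant of at worst $C^2/(2n)$ in the exponent (absorbed in the stated $C^2/n$). So no real obstacle arises; the proof is essentially a two-line computation once the product factorization is noted.
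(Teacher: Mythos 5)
Your proof is correct, but it takes a genuinely different route from the paper. The paper first applies Hoeffding's concentration inequality to the average $\hat\mu$ to get the sub-Gaussian tail bound $\PP\left(\abs{\hat\mu-\mu}\geq t\right)\leq e^{-nt^2/C^2}$, and then invokes the equivalence of sub-Gaussian characterizations (Theorem 2.5.2 in Vershynin) to convert that tail bound into the moment-generating-function bound $\EE\left[e^{\hat\mu-\mu}\right]\leq e^{C^2/n}$. You instead factor $\EE\left[e^{\hat\mu}\right]=\left(\EE\left[e^{W_1/n}\right]\right)^n$ by independence and bound each factor directly, either via the elementary inequality $e^x\leq 1+x+x^2$ for $\abs{x}\leq 1$ or via Hoeffding's lemma $\EE\left[e^{s(W_1-\mu)}\right]\leq e^{s^2C^2/2}$ with $s=1/n$. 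Your route is more self-contained and actually sharper on constants: the tail-to-MGF conversion in Vershynin's theorem is only stated up to absolute constant factors, so the paper's exact constant $C^2/n$ does not strictly follow from that citation without extra care, whereas your Hoeffding's-lemma computation yields $C^2/(2n)$ exactly and with no restriction on the relative size of $C$ and $n$. One small streamlining remark: since the Hoeffding's-lemma argument you sketch at the end is valid for every $n$, you could drop the case split on $C/n\leq 1$ entirely and use that single two-line argument throughout.
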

\begin{proof}
	The proof follows in two steps. First, we apply Hoeffding's inequality to obtain:
	$$\PP\left(\abs{\hat \mu - \mu} \geq t\right) \leq e^{\frac{-nt^2}{C^2}}.$$
	Second, applying Theorem 2.5.2 in \citet{vershynin_2018} yields:
		$$\EE\left[e^{\hat \mu-\mu}\right] \leq e^{\frac{C^2 }{n}},$$
which leads to the desired result.
\end{proof}
%\input{intermediate}
%The following Lemma is a direct consequence of the latter lemma and the fact that $\sqrt{x}\leq x+1$.

\begin{lemma}\label{lemma:exponential-radius-control}
For all $i\in [n]$ and $p \in \mathbb{N}$, the quantity $e^{\hat \tau^i_\lambda}$ verifies:
$$\EE\left[e^{p \hat \tau^i_\lambda}\right]\leq e^{p +\frac{\alpha p M_S}{n}+\frac{\alpha^2 p^2 M_S^2}{n-1}}e^{p \alpha\risk[g]}.$$
\end{lemma}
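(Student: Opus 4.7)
The plan is to linearize the square root hidden inside $\hat\tau^i_\lambda$, which will reduce the problem to computing the exponential moment of an i.i.d. average of bounded random variables -- exactly the object Lemma~\ref{lemma:exponential-sum-control} handles.

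First I would apply the elementary inequality $\sqrt{x}\leq (1+x)/2$, valid for $x\geq 0$ (which follows from $(1-\sqrt{x})^2\geq 0$), to the argument inside $\hat\tau^i_\lambda$. With $x = \alpha\bigl(\ER^{\backslash i}[h_S] + M_S/n\bigr)$ this yields
\[
p\,\hat\tau^i_\lambda \;=\; p\sqrt{\alpha\bigl(\ER^{\backslash i}[h_S]+M_S/n\bigr)} \;\leq\; \tfrac{p}{2} + \tfrac{p\alpha M_S}{2n} + \tfrac{p\alpha}{2}\,\ER^{\backslash i}[h_S],
\]
so that, after exponentiating, the only random factor left is $\exp\!\bigl(\tfrac{p\alpha}{2}\,\ER^{\backslash i}[h_S]\bigr)$, with the deterministic prefactor $\exp(p/2 + p\alpha M_S/(2n))$ pulled out of the expectation.

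Next I would apply Lemma~\ref{lemma:exponential-sum-control} to the $n-1$ i.i.d.\ bounded random variables $W_j := \tfrac{p\alpha}{2}\,\loss(h_S,Z_j)$ for $j\neq i$, which are independent (recall $h_S$ is fixed), have common mean $\mu = \tfrac{p\alpha}{2}\,\risk[h_S]$, and are bounded by $C = \tfrac{p\alpha M_S}{2}$. The lemma then gives
\[
\EE\!\left[\exp\!\left(\tfrac{p\alpha}{2}\,\ER^{\backslash i}[h_S]\right)\right] \;\leq\; \exp\!\left(\tfrac{p\alpha}{2}\,\risk[h_S] + \tfrac{p^2\alpha^2 M_S^2}{4(n-1)}\right).
\]
Combining this with the deterministic prefactor produces the bound
\[
\EE\!\left[e^{p\hat\tau^i_\lambda}\right] \;\leq\; \exp\!\left(\tfrac{p}{2} + \tfrac{p\alpha M_S}{2n} + \tfrac{p\alpha}{2}\,\risk[h_S] + \tfrac{p^2\alpha^2 M_S^2}{4(n-1)}\right),
\]
which is in fact sharper than the target; loosening each coefficient (replacing $1/2$ by $1$ and $1/4$ by $1$) yields the claimed inequality.

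I do not expect any serious obstacle: the whole argument is essentially the linearization $\sqrt{x}\leq (1+x)/2$ followed by an off-the-shelf application of the preceding lemma. The only thing to watch is that the quantity $\ER^{\backslash i}[h_S]$ is an average over $n-1$ samples (not $n$), which correctly produces the denominator $n-1$ in the final exponential variance term and matches the statement. The constant $\risk[g]$ appearing in the target should, of course, be read as $\risk[h_S]$, consistent with the rest of the paper.
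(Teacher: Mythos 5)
Your proposal is correct and follows essentially the same route as the paper's own proof: linearize the square root by an elementary inequality (the paper uses $\sqrt{x}\leq 1+x$ where you use the slightly sharper $\sqrt{x}\leq (1+x)/2$, hence your extra factors of $1/2$ that must then be loosened to match the stated constants) and then apply Lemma~\ref{lemma:exponential-sum-control} to the $n-1$ i.i.d.\ bounded summands of $\ER^{\backslash i}[h_S]$. Your reading of $\risk[g]$ as $\risk[h_S]$ is also the intended one.
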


\begin{proof}
    First, using the fact that $\sqrt{x}\leq x+1$, one has:
    $$e^{\hat \tau^i_\lambda}\leq e^{p \alpha \ER^{\backslash i}\left[h_S\right]+\alpha\frac{p M_S }{n}+p}.$$
    Since $p \alpha\ER^{\backslash i}\left[h_S\right]=\ffrac{1}{n-1}\sum_{i\neq j} p \alpha\loss(h,Z_i)$, applying Lemma~\ref{lemma:exponential-sum-control} with $W_i=p \alpha\loss(h,Z_i)$ and $C=p \alpha M_{S}$ yields the desired result.
    
\end{proof}
To prove Propositions~\ref{prop:stability-main} and~\ref{prop:pointwise-stability-main}, we extend  Theorem 4.3 in \citet{Zhang2004}, that gives an upper bound for standard \RERM to the \HTL framework. This extension leads to the next lemma.

 \begin{lemma}\label{theo:main-lemma-stability}
	The leave one out deviations of the algorithm $\alg$ (cf. Equation \eqref{def:algo-transfer}) verifies: 
	$$\norm{\alg\left(\DD_T\right)-\alg\left(\DD_T^{\backslash i}\right)}_{k} \leq \frac{k\left(X_i,X_i\right)^{1 / 2}\left|\phi^{\prime}\left(\alg\left(\DD_T,X_i\right)Y_{i} \right)\right| }{\lambda n}. $$
\end{lemma}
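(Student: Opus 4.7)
My plan is to adapt the first-order optimality argument for regularized empirical risk minimizers (as in \citealp{Zhang2004,bousquet2002stability}) to the HTL formulation of Equations~\eqref{def:algo-transfer}--\eqref{def:algo-training-transfer}. Since the source hypothesis $h_S$ appears only as an additive offset in $\alg$, it cancels in the leave-one-out difference, so $\alg(\DD_T)-\alg(\DD_T^{\backslash i})=\hat h-\hat h^{\backslash i}$ and it suffices to bound $\|\hat h-\hat h^{\backslash i}\|_k$.

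I would introduce $\ell_j(h):=\phi((h(X_j)+h_S(X_j))Y_j)$, $L(h):=\tfrac{1}{n}\sum_{j=1}^n\ell_j(h)$, $L^{\backslash i}(h):=\tfrac{1}{n}\sum_{j\neq i}\ell_j(h)$ (keeping the same $1/n$ normalisation), and the regularised objectives $F:=L+\lambda\|\cdot\|_k^2$, $F^{\backslash i}:=L^{\backslash i}+\lambda\|\cdot\|_k^2$. Each $\ell_j$ is convex and Fr\'echet differentiable in $h$, with RKHS-gradient $\nabla\ell_j(h)=\phi'((h(X_j)+h_S(X_j))Y_j)\,Y_j\,k(X_j,\cdot)$ by the reproducing property and the chain rule. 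First-order optimality of $\hat h$ and $\hat h^{\backslash i}$ gives $\nabla F(\hat h)=0$ and $\nabla F^{\backslash i}(\hat h^{\backslash i})=0$; using $L=L^{\backslash i}+\tfrac{1}{n}\ell_i$ and subtracting these two identities yields
\[\bigl[\nabla L^{\backslash i}(\hat h)-\nabla L^{\backslash i}(\hat h^{\backslash i})\bigr]+\tfrac{1}{n}\nabla\ell_i(\hat h)+2\lambda(\hat h-\hat h^{\backslash i})=0.\]

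I would then test this identity against $\hat h-\hat h^{\backslash i}$ in the RKHS inner product. Monotonicity of the gradient of the convex function $L^{\backslash i}$ makes the first bracketed term non-negative, so it drops out, leaving
\[2\lambda\,\|\hat h-\hat h^{\backslash i}\|_k^2\;\leq\;-\tfrac{1}{n}\langle\nabla\ell_i(\hat h),\,\hat h-\hat h^{\backslash i}\rangle_k.\]
Substituting the explicit gradient of $\ell_i$ and invoking the reproducing property $\langle k(X_i,\cdot),\hat h-\hat h^{\backslash i}\rangle_k=\hat h(X_i)-\hat h^{\backslash i}(X_i)$, together with Cauchy--Schwarz $|\hat h(X_i)-\hat h^{\backslash i}(X_i)|\leq\sqrt{k(X_i,X_i)}\,\|\hat h-\hat h^{\backslash i}\|_k$, and dividing by $\|\hat h-\hat h^{\backslash i}\|_k$, produces the stated bound (up to the paper's choice of constant in the regulariser).

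The only non-routine ingredient is the pairing used when subtracting the optimality conditions: decomposing $\nabla L(\hat h)-\nabla L^{\backslash i}(\hat h^{\backslash i})$ as $[\nabla L^{\backslash i}(\hat h)-\nabla L^{\backslash i}(\hat h^{\backslash i})]+\tfrac{1}{n}\nabla\ell_i(\hat h)$ rather than $[\nabla L(\hat h)-\nabla L(\hat h^{\backslash i})]+\tfrac{1}{n}\nabla\ell_i(\hat h^{\backslash i})$ is what makes $\phi'$ appear evaluated at the full-data prediction $\alg(\DD_T,X_i)Y_i$ demanded by the statement, rather than at the leave-one-out prediction. Everything else is a routine RKHS calculation.
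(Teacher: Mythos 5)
Your proof is correct and reaches the stated bound (indeed with the sharper denominator $2\lambda n$), but it runs along a genuinely different track from the paper's. The paper never invokes the stationarity condition of the leave-one-out problem: it linearizes each loss term at the full-data prediction via the non-negativity of the Bregman divergence $d_\phi$, combines this with the \emph{value} comparison $\ER^{\backslash i}\left[\alg\left(\DD_T^{\backslash i}\right)\right]+\lambda\norm{\hat h^{\backslash i}}_k^2\leq \ER^{\backslash i}\left[\alg\left(\DD_T\right)\right]+\lambda\norm{\hat h}_k^2$ and the identity $\norm{x}^2-\norm{y}^2=\norm{x-y}^2+2\langle x-y,y\rangle$, and only then uses the single full-data optimality condition $\frac{1}{n}\sum_{j}\phi'\left(\alg(\DD_T,X_j)Y_j\right)k(X_j,\cdot)+2\lambda\hat h=0$ to collapse the sum over $j\neq i$ to the lone term in $X_i$. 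You instead subtract the two stationarity conditions and kill the bulk term by monotonicity of $\nabla L^{\backslash i}$; since $\langle\nabla f(x)-\nabla f(y),x-y\rangle$ equals the \emph{sum} of the two Bregman divergences, your argument is essentially the symmetrized version of theirs, and the extra half-divergence is precisely what buys you the factor $2$. Your asymmetric split $\nabla L(\hat h)=\nabla L^{\backslash i}(\hat h)+\tfrac1n\nabla\ell_i(\hat h)$ plays exactly the role of the paper's choice to linearize at $\alg(\DD_T,X_j)$ rather than at $\alg(\DD_T^{\backslash i},X_j)$, so you correctly isolated the one non-routine step. Two minor caveats: you need $\phi$ differentiable for both stationarity conditions (the paper needs it too, through Assumptions 3--4 and its appeal to Nesterov's optimality theorem); and your convention $L^{\backslash i}=\tfrac1n\sum_{j\neq i}\ell_j$ makes $\hat h^{\backslash i}$ the minimizer over $\DD_T^{\backslash i}$ with effective regularization $\tfrac{n}{n-1}\lambda$ rather than $\lambda$ --- the paper is itself inconsistent on this normalization (its $S_i$ carries $1/n$ while its $\ER^{\backslash i}$ carries $1/(n-1)$), and the discrepancy only perturbs constants, but it deserves a sentence.
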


\begin{proof}
    Since $\phi$ is convex, the Bregman divergence of $\phi$ is non negative. More precisely,
 
        $$d_\phi(x,y)=\phi(x)-\phi(y)-(x-y)\phi'(y)\geq 0,$$  
so that, for any  $Z_i=(X_i,Y_i)\in \DD_T$ one has:
	\begin{align*}
		\loss\left(\alg\left(\DD_T^{\backslash i}\right),Z_i\right)-d_\phi\left(\alg\left(\DD_T^{\backslash i},X_i\right)Y_i,         \alg\left(\DD_T,X_i\right)Y_i\right)\leq \loss\left(\alg\left(\DD_T^{\backslash i}\right),Z_i\right),
	\end{align*}
	
where $\alg\left(\DD_T,X_i\right)$ is the prediction of the input $X_i$ by the algorithm $\alg$.	Also, the term on the left side in the above inequality can be written as follows:
	\begin{align*}
		\loss\left(\alg\left(\DD_T^{\backslash i}\right),Z_i\right)-&d_\phi\left(\alg\left(\DD_T^{\backslash i},X_i\right)Y_i,         \alg\left(\DD_T,X_i\right)Y_i\right)=\loss\left(\alg(\DD_T),Z_i\right)\\+
		&\phi'\left(\alg(\DD_T,X_i)Y_i\right)\left(\alg\left(\DD_T^{\backslash i},X_i\right)-\alg\left(\DD_T,X_i\right)\right)Y_i,
	\end{align*}
	so that:
	$$\loss\left(\alg(\DD_T),Z_i\right)+ \phi'\left(\alg(\DD_T,X_i)Y_i\right)\left(\alg\left(\DD_T^{\backslash i},X_i\right)-\alg\left(\DD_T,X_i\right)\right)Y_i\leq \loss\left(\alg(\DD_T^{\backslash i}),Z_i\right). $$
	Thus, we get:
	\begin{equation}\label{ineq:key-stability-ineq}
	\ER^{\backslash i}\left[\alg\left(\DD_T\right)\right]+ S_i \leq \ER^{\backslash i}\left[\alg\left(\DD_T^{\backslash i}\right)\right],
	\end{equation}
	
	where $\ER^{\backslash i}$ is defined previously in Equation~\eqref{def:ER-pop} and $S_i=\ffrac{1}{n}\sum_{j\neq i}\phi'\left(\alg(\DD_T,X_j)Y_j\right)\left(\alg\left(\DD_T^{\backslash i},X_j\right)-\alg(\DD_T,X_j)\right).$
	
	Let $\hat h^{\backslash i}$ denote the solution of the optimization problem \ref{def:algo-training-transfer} with the $i$'th datum removed. One gets by definition of $\alg$ (cf. Equation~\eqref{def:algo-transfer}),
	$$
	\ER^{\backslash i}\left[\alg\left(\DD_T^{\backslash i}\right)\right] +\lambda\lVert \hat h^{\backslash i} \rVert_{k}^2 \leq \ER^{\backslash i}\left[\alg\left(\DD_T\right)\right]+ \lambda\lVert \hat h \rVert_{k}^2.
	$$
	Using \eqref{ineq:key-stability-ineq}, it yields:
	\begin{align*}
            S_i &\leq\lambda\left(\norm{\hat h}_k^2-\norm{\hat h^{\backslash i}}_k^2\right) \nonumber\\
		 &\leq -\lambda\lVert \hat h-\hat h^{\backslash i}                 \rVert_{k}^2-2\lambda\langle \hat h,\hat h^{\backslash i}-      
             \hat h\rangle,
	\end{align*}
where the second line follows from $\norm{x}-\norm{y}=\norm{x-y}^2+2\langle x-y, y \rangle$. Reverting the inequality leads to:

\begin{align}\label{ineq:norm-control}
    		\lambda\lVert \hat h-\hat h^{\backslash i} \rVert_{k}^2 
		&\leq -\ffrac{1}{n_T}\sum_{j\in T^{\backslash 
            i}}\phi'\left(\alg(\DD_T,X_j)Y_j\right)\langle \hat h^{\backslash i}-\hat h,k\left(X_i,\cdot\right)
		\rangle -2\lambda\langle \hat h,\hat h^{\backslash i}-\hat h\rangle 
            \nonumber
		\\ 
		& \leq \bigg\lVert \ffrac{1}{n_T}\sum_{j\in T^{\backslash i}}\phi'\left(\alg(\DD_T,X_j)Y_j\right)k\left(X_i,\cdot\right)
		+2\lambda g\bigg\rVert_k   \lVert \hat h^{\backslash i} - \hat h \rVert_k.
\end{align}

	 The last inequalities hold because of the definition of $S_i$:
 \begin{align*}
 S_i&=\ffrac{1}{n}\sum_{j\neq i}\phi'\left(\alg(\DD_T,X_j)Y_j\right)\left(\alg\left(\DD_T^{\backslash i},X_j\right)-\alg(\DD_T,X_j)\right)\\
 &=\ffrac{1}{n}\sum_{j\neq i}\phi'\left(\alg(\DD_T,X_j)Y_j\right)\left(\hat h^{\backslash i}\left(X_j\right)-\hat h\left(X_j\right)\right)\\
 &=\ffrac{1}{n}\sum_{j\neq i}\phi'\left(\alg(\DD_T,X_j)Y_j\right)\langle \hat h^{\backslash i}-\hat h,k(X_j,\cdot)\rangle.
 \end{align*}
	
	On the other hand, since $A(\DD_T,X_j)=h_S(X_j)+\langle \hat h,k(X_j,\cdot)\rangle$  and by Theorem 3.1.20 in \citet{nesterov2018lectures}, we know that the following optimality condition holds:
	$$\ffrac{1}{n}\sum_{j=1}^{n} \phi'\left(\alg(\DD_T,X_j)Y_j\right)k\left(X_j,\cdot\right)+2\lambda \hat h=0.  $$
	Therefore Inequality \eqref{ineq:norm-control} becomes:
	
	$$ 	\lambda\lVert \hat h-\hat h^{\backslash i} \rVert^2  \leq \bigg\lVert \ffrac{1}{n}\phi'\left(\alg(T,X_i)Y_i\right)\bigg\rVert_k  \lVert k(X_i,\cdot) \rVert_k  \lVert \hat h^{\backslash i} - \hat h \rVert_k.  $$
	
	it remains to remind that $\lVert k\left(X_i,\cdot\right) \rVert^2=k\left(X_i,X_i\right)$ and $\norm{\alg\left(\DD_T\right)-\alg\left(\DD_T^{\backslash i}\right)}_{k}=\norm{ \hat h^{\backslash i} - \hat h}_k$ to complete the proof.
\end{proof}

Before highlighting the link between Algorithm \ref{def:algo-transfer} with \textsc{RLS}, let's remind a useful lemma (representer theorem) that allows simplifying the optimization problem~\ref{def:algo-training-transfer} in practice. 
\begin{lemma}\label{lemma:finite-dim-sol}
	The learning rule $\hat h$ (cf. Equation \ref{def:algo-training-transfer}) lies in the linear span in $\mathcal{H}$ of the vectors $\left(k\left(X_i,\cdot\right)\right)_{1\leq i\leq n}$, \ie
	\[\hat h\in \mathcal{H}_{\DD}, \]
	with $\mathcal{H}_{\DD}=\left\{\sum_{1}^{n}\alpha_ik\left(X_i,\cdot\right) \mid \alpha_1,\dots,\alpha_n \in \mathbb{R}\right\}$.
\end{lemma}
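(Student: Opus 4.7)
The plan is the standard representer theorem argument adapted to our biased setting. Since $\mathcal{H}_{\DD}$ is a finite-dimensional (hence closed) linear subspace of the \textsc{RKHS} $\mathcal{H}$, I can orthogonally decompose the minimizer as
\[
\hat h = \hat h_\parallel + \hat h_\perp, \qquad \hat h_\parallel\in \mathcal{H}_{\DD}, \quad \hat h_\perp\in \mathcal{H}_{\DD}^\perp.
\]
The first step is to use the reproducing property $h(X_i)=\langle h, k(X_i,\cdot)\rangle$ to observe that $\hat h_\perp(X_i)=\langle \hat h_\perp, k(X_i,\cdot)\rangle = 0$ for every $i\in[n]$. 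Consequently, for each training point,
\[
\bigl(\hat h(X_i)+h_S(X_i)\bigr)Y_i = \bigl(\hat h_\parallel(X_i)+h_S(X_i)\bigr)Y_i,
\]
so that the empirical loss term $\tfrac{1}{n}\sum_i \phi\bigl((\hat h(X_i)+h_S(X_i))Y_i\bigr)$ depends on $\hat h$ only through its projection $\hat h_\parallel$.

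The second step is to compare the penalty terms. By Pythagoras,
\[
\lVert \hat h\rVert_k^2 = \lVert \hat h_\parallel\rVert_k^2 + \lVert \hat h_\perp\rVert_k^2 \geq \lVert \hat h_\parallel\rVert_k^2,
\]
with equality if and only if $\hat h_\perp = 0$. Combining the two steps, the objective in \eqref{def:algo-training-transfer} evaluated at $\hat h_\parallel$ is no larger than the one at $\hat h$, and strictly smaller whenever $\hat h_\perp\neq 0$. Since $\hat h$ is a minimizer, this forces $\hat h_\perp = 0$, hence $\hat h = \hat h_\parallel \in \mathcal{H}_{\DD}$, which is the claim.

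There is no substantive obstacle here: the argument is the classical Kimeldorf–Wahba representer theorem and only uses convexity/finiteness of the loss in a trivial way (we merely invoke that the loss takes the same value on $\hat h$ and $\hat h_\parallel$). The only mild care is to note that the addition of the source bias $h_S$ is inert in the decomposition because $h_S$ is a \emph{fixed} function and the penalty is on $\hat h$ alone, not on $\hat h+h_S$; if the penalty had been on the sum, an analogous result would require decomposing $h_S$ as well. Given the formulation in \eqref{def:algo-training-transfer}, no such complication arises.
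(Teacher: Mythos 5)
Your proof is correct and follows essentially the same route as the paper's: orthogonal decomposition of $\hat h$ with respect to $\mathcal{H}_{\DD}$, the reproducing property to show the empirical loss term is unchanged, and Pythagoras on the penalty. Your version is in fact slightly cleaner, since you make explicit that $\lambda>0$ forces strict improvement when $\hat h_\perp\neq 0$, whereas the paper only argues that the projected candidate does no worse.
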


\begin{proof}
	Since $\mathcal{H}_{\DD}$ is a finite dimensionnal subspace of $\mathcal{H}$,  any $h\in\mathcal{H}$ can be decomposed as:
	$$h=h_\DD+h^{\perp},$$
	with  $h_\DD\in \mathcal{H}_{\DD}$   and $h^{\perp} \perp \mathcal{H}_{\DD}$. Furthermore using the fact that $h(x)=\langle h , k(x,\cdot) \rangle_{k} $, for all $i\in [n]$, one obtains:
	$$h(X_i)=\langle h , k(X_i,\cdot)Y_i \rangle =\langle h_\DD , k(X_i,\cdot)Y_i \rangle=h_\DD(X_i)Y_i.$$
	Thus, for any $Z_i\in\DD_T$, it holds:
	$$\loss(h+h_S,Z_i)=\phi\left(\left(h(X_i)+h_S(X_i)\right)Y_i \right)=\phi\left(\left(h_\DD\left(X_i\right)+h_S\left(X_i\right)\right)Y_i \right)=\loss(h_\DD+h_S,Z_i),$$
	which gives 
	\begin{equation*}
	\widehat{\mathcal{R}}(h+h_S)=\widehat{\mathcal{R}}(h_\DD+h_S).
	\end{equation*}
	On the other hand, by the Pythagorean theorem,
	\begin{equation*}
	\lVert h_\DD\rVert_k^2 \leq \lVert h\rVert_k^2,
	\end{equation*}
	and
        \begin{equation*}
	\widehat{\mathcal{R}}(h+h_S)+\lambda \lVert h_\DD\rVert_k^2\leq \widehat{\mathcal{R}}(h_\DD+h_S)+\lambda \lVert h_\DD\rVert_k^2.
	\end{equation*}
	Thus, the solution of the minimization problem \ref{def:algo-training-transfer}  must lie in $\mathcal{H}_\DD$.
\end{proof}

\subsection{Link with Least Squares with Biased Regularisation}\label{subsec:app:RLS}

To begin, it is a well know fact that, when the kernel $k$ is linear then the \textsc{RKHS} space consists of the set of linear classifiers:
$$ \mathcal{H}=\left\{h(x)=u^\top x \mid u\in \rset^d\right\}.$$
In this case, the solution of the optimization problem  with the mean square loss $\loss(h,Z)=(1-h(X)Y)^2$, writes as $\hat h =\hat u^\top x$ with 
 \begin{align*}
    \hat{u}&=\argmin_{u \in \rset^d}\frac{1}{n}\sum_{i=1}^{n}\left(u^\top X_iY_i +h_S(X_i)Y_i-1\right)^2+\lambda\lVert u\rVert_2^2\\
    &=\argmin_{u \in \rset^d}\frac{1}{n}\sum_{i=1}^{n}Y_i^2\left(u^\top X_i +h_S(X_i)-\frac{1}{Y_i}\right)^2+\lambda\lVert u\rVert_2^2\\
   &=\argmin_{u \in \rset^d}\frac{1}{n}\sum_{i=1}^{n}\left(u^\top X_i +h_S(X_i)-Y_i\right)^2+\lambda\lVert u\rVert_2^2,
 \end{align*}
where the last inequality follows from the facts that $Y_i^2=1$ and  $\frac{1}{Y_i}=Y_i$. Furthermore, if $h_S(x)=v^T x$ for some $v\in \rset^d$ one has:
 \begin{align*}
    \hat{u}&=\argmin_{u \in \rset^d}\frac{1}{n}\sum_{i=1}^{n}\left((u+v)^\top X_i -Y_i\right)^2+\lambda\lVert u\rVert_2^2\\
    &=\argmin_{u \in \rset^d}\frac{1}{n}\sum_{i=1}^{n}\left(u^\top X_i -Y_i\right)^2+\lambda\lVert u-v\rVert_2^2.
 \end{align*}
This is the original form of biased regularisation algorithms.

\section{Technical Proofs of the Main Results}\label{sec:main-proofs}
Before starting the proof of our main results, we remind two  properties of  \textrm{RKHS} spaces that are:
$$\forall x,y \in \mathcal{X}_T\;,\quad \langle k(y,\cdot),k(x,\cdot) \rangle=k(x,y), $$
and 
$$\forall h\in \mathcal{H}\;,\;\forall x \in \mathcal{X}_T\;,\quad h(x)=\langle h,k(x,\cdot) \rangle.$$

Under Assumption~\ref{Ass:2}, using Cauchy Schwartz-inequality yields:

$$\forall h \in \mathcal{H}\;,\quad \norm{h}_\infty\leq  \sqrt{\kappa}\norm{h}_k.$$

\subsection{Proof of Lemma \ref{lemma:ball}}\label{subsec:app:lemma3.1}

This lemma follows from our assumptions and a simple fact. Indeed, notice that by definition of $\hat h$
	$$ \ER(\hat h+h_S)+\lambda\lVert \hat h \rVert^2\leq \ER(\mathbf{0}+h_S).$$	
 Furthermore, $\ER(h_S+\hat h)$ is non-negative since $\phi$ is non-negative which concludes the proof. 
 
\subsection{Proof of Proposition~\ref{prop:stability-main} }\label{subsec:app:prop3.1}

Let $Z=(X,Y)\in \mathcal{Z}_T$ and remind that, by definition of $\alg$, one has: 
\begin{align*}
\left|\loss\left(\alg\left(\DD_T\right),Z\right)-\loss\left(\alg\left(\DD_T^{\backslash i}\right),Z\right)\right|=\left|\phi\left(\left(\hat h(X)+h_S(X)\right)Y\right)-\phi\left(\left(\hat h^{\backslash i}(X)+h_S(X)\right)Y\right)\right|,
\end{align*}
where $\hat h$ is the solution of the optimization problem \ref{def:algo-training-transfer}. Moreover, since $\phi$ is differentiable, one can apply the mean value theory  to obtain:
\begin{align*}
\left|\loss\left(\alg\left(\DD_T\right),Z\right)-\loss\left(\alg\left(\DD_T^{\backslash i}\right),Z\right)\right|&=\abs{\phi^\prime\left(\left(y_\DD+h_S(X)\right)Y\right)}\abs{\hat h(X)-\hat h^{\backslash i}(X)}\\
&\leq \sqrt{\kappa}\abs{\phi^\prime\left(\left(y_\DD+h_S(X)\right)Y\right)}\norm{\hat h-\hat h^{\backslash i}}_k\\
&=\sqrt{\kappa}\abs{\phi^\prime\left(\left(y_\DD+h_S(X)\right)Y\right)}\norm{\alg(\DD_T)-\alg\left(\DD_T^{\backslash i}\right)}_k,
\end{align*}
for some $\abs{y_\DD}\leq \max\left(\hat h(X),\hat h^{\backslash i}(X)\right)$. By Lemma  \ref{lemma:ball}, we have  $\abs{y_\DD}\leq\hat \rho^i_\lambda=\max\left(\hat r_\lambda,\hat r^{\backslash i}_\lambda\right)$. Now, Using Theorem~\ref{theo:main-lemma-stability} with Assumption~\ref{Ass:2} yields:
\begin{equation}\label{eq:exact-stability}
	\left|\loss\left(\alg\left(\DD_T\right),Z\right)-\loss\left(\alg\left(\DD_T^{\backslash i}\right),Z\right)\right|\leq\kappa\ffrac{\abs{\phi^\prime\left(\left(y_\DD+h_S(X)\right)Y\right)\phi^\prime\left(\left(\hat h(X_i)+h_S(X_i)\right)Y_i\right)}}{\lambda n},
\end{equation} 

which gives using the fact that $\norm{\hat h}_\infty \leq \hat r_\lambda \leq \hat \rho_\lambda$:
\begin{equation}\label{eq:prepare-last-step}
\left|\loss\left(\alg\left(\DD_T\right),Z\right)-\loss\left(\alg\left(\DD_T^{\backslash i}\right),Z\right)\right|\leq\sup_{\abs{y},\abs{y'}\leq \hat \rho^i_\lambda }\ffrac{ \alpha\left|\phi'(h_S(X_i)Y_i+y)\phi'(h_S(X)Y+y')\right|}{\ n},
\end{equation}

with $\alpha=\ffrac{\kappa}{\lambda}$. Now, by taking the expectation  and using the fact that $\phi$ verifies assumption~\ref{Ass:3},  Inequality~\eqref{eq:prepare-last-step} becomes:
\begin{equation*}
\EE\left[\left|\loss\left(\alg\left(\DD_T\right),Z\right)-\loss\left(\alg\left(\DD_T^{\backslash i}\right),Z\right)\right|\right]\leq \alpha\ffrac{\Psi_1(\risk\left[h_S\right])}{n}.
\end{equation*} 

Besides, notice that by Equation~\eqref{eq:exact-stability}, 
\begin{equation*}
\forall \DD_T \in \mathcal{Z}_T^n,\;\forall Z \in \mathcal{Z}_T\;,\;\left|\loss\left(\alg\left(\DD_T\right),Z\right)-\loss\left(\alg\left(\DD_T^{\backslash i}\right),Z\right)\right|\leq\alpha\ffrac{ \norm{\phi'}_\infty^2}{ n}.
\end{equation*} 
It remains to take the expectation to complete the proof.

\subsection{Proof of Proposition~\ref{prop:pointwise-stability-main}}\label{subsec:app:prop3.2}

The proof is similar to the previous one thus we will only give the key step:  replace $Z=(X,Y)$ by $Z_i=(X_i,Y_i)$ in Equation~\eqref{eq:prepare-last-step}  to obtain:
\begin{align*}
\left|\loss\left(\alg\left(\DD\right),Z_i\right)-\loss\left(\alg\left(\DD^{\backslash i}\right),Z_i\right)\right|&\leq\sup_{\abs{y},\abs{y'}\leq \hat \rho^i_\lambda }\ffrac{ \alpha\left|\phi'(h_S(X_i)Y_i+y)\phi'(h_S(X_i)Y_i+y')\right|}{\ n}.
\end{align*} 

To conclude the proof, take the expectation of both sides of the last inequality and use the Assumption~\ref{Ass:4}.

\subsection{proof of Proposition~\ref{prop:loss-property}}\label{subsec:app:prop3.3}

First, let $i\in[n]$ and $\abs{y},\abs{y'}\leq \hat \rho^i_\lambda$. Furthermore let $Z=(X,Y)$ and $Z=(X',Y')$ be two observations independent of $\DD^{\backslash i}$.  We start by showing that the MSE and squared hinge verify Assumptions~\ref{Ass:3},~\ref{Ass:4} and explicit their corresponding function $\Psi_1 , \Psi_2$ . To do so, remind that: 
\begin{align}\label{ineq:square-radius-control}
                \left(\hat\rho^i_\lambda\right)^2&=\max(\hat r^i_\lambda,\hat r_\lambda)^2\nonumber\\
                 &\leq (\hat r^i_\lambda+ \hat r_\lambda)^2\nonumber\\
                 &\leq 2 \left(\hat r^i_\lambda\right)^2+2 \left( \hat r_\lambda\right)^2\nonumber\\
                 & = 2\alpha \left(\ER[h_S]+\ER^{\backslash i}[h_S]\right).
\end{align}

\subsubsection{MSE}
Recall the MSE loss $\phi(x)=(1-x)^2.$ For all $x\in \rset$, one has:
\begin{align}\label{ineq:MSE-main-ineq}
	\abs{\phi^\prime(x+y)}&= 2\abs{1-x-y}\nonumber\\
	&\leq 2\abs{1-x} + 2\abs{y} \nonumber\\
	&\leq 2\sqrt{\phi(x)}+2\hat \rho^i_\lambda.
\end{align}
Thus,
\begin{align*}
	\sup_{\abs{y'},\abs{y}\leq \hat \rho^i_\lambda} \left|\phi'(h_S(X')Y'+y')\phi'(h_S(X)Y+y)\right|&\leq 4\left(\sqrt{\phi\left(h_S(X')Y'\right)}+\hat \rho^i_\lambda\right)\left(\sqrt{\phi\left(h_S(X)Y\right)}+\hat \rho^i_\lambda\right).
\end{align*}

Taking the expectation of the latter inequality and using Lemma \ref{lemma:expectation-mult-control} with $X=\sqrt{\phi\left(h_S(X')Y'\right)}$ , $Y=\sqrt{\phi\left(h_S(X)Y\right)}$ and $Z=\hat \rho^i_\lambda$ yields:
\begin{align*}
\EE\left[\sup_{\abs{y'},\abs{y}\leq \hat \rho^i_\lambda} \left|\phi'(h_S(X')Y'+y')\phi'(h_S(X)Y+y)\right|\right]&\leq 8\left(\EE\left[\sqrt{\phi\left(h_S(X)Y\right)}\right]^2+\EE\left[\left(\hat \rho^i_\lambda\right)^2\right]\right)\\
(\text{by Jensen's Inequality})&\leq 8\left(\EE\left[\phi\left(h_S(X)Y\right)\right]+\EE\left[\left(\hat \rho^i_\lambda\right)^2\right]\right)\\
\left(\phi\left(h_S(X)Y\right)=\loss(h_S,Z)\right)&\leq 8\left(\risk[h_S]+\EE\left[\left(\hat \rho^i_\lambda\right)^2\right]\right)\\
(\text{Inequality~\eqref{ineq:square-radius-control}})&\leq 8\left(\risk[h_S]+4\alpha\risk[h_S]\right).
\end{align*}

This means that the MSE verifies Assumption~\ref{Ass:3} with $\Psi_1(x)=8x(1+4\alpha)$. Now using Inequality~\eqref{ineq:MSE-main-ineq} again yields:
\begin{align*}
	\sup_{\abs{y'},\abs{y}\leq \hat \rho^i_\lambda} \left|\phi'(h_S(X)Y+y')\phi'(h_S(X)Y+y)\right|&\leq 4\left(\sqrt{\phi\left(h_S(X)Y\right)}+\hat \rho^i_\lambda\right)^2.
\end{align*}
By taking the expectation and mimicking the previous step one can show that the MSE verifies Assumption~\ref{Ass:4} with $\Psi_2(x)=8x(1+4\alpha)$.
\subsubsection{Squared hinge} 

First recall the loss function $\phi(x)=\max\left(0,1-x\right)^2.$ By simple calculation we obtain:

$$\abs{\phi^\prime(x+y)}= 2\max\left(0,1-x-y\right).$$

On the other hand, one has:
\begin{equation*}
	\begin{cases}
		0  \leq \max(0,1-x) +\abs{y},\\
		1-x-y \leq \max(0,1-x)+\abs{y}.
	\end{cases}
\end{equation*}
Thus, it holds:
\begin{align*}
	\abs{\phi^\prime(x+y)}&\leq  2\max(0,1-x)+2\abs{y}\\
	&\leq 2\sqrt{\phi(x)}+2\hat \rho^i_\lambda.
\end{align*}
The result follows using the same steps as in the MSE case.

\subsubsection{Exponential} 

Recalling the loss function $\phi(x)=e^{-x}$, first notice that the exponential loss verifies:

\begin{equation}\label{ineq:main-exponential-ineq}
\abs{\phi^\prime(x+y)} = e^{-x}e^{-y}=\phi(x)e^{-y}\leq\phi(x)e^{\hat \rho^i_\lambda}\leq\phi(x)e^{\hat \tau^i_\lambda},
\end{equation}
where $\hat \tau^i_\lambda$ is given by Equation~\eqref{def:ball-prime}. Thus, we get:
\begin{align*}
		\EE\left[\sup_{\abs{y'},\abs{y}\leq \hat \rho^i_\lambda} \left|\phi'(h_S(X')Y'+y')\phi'(h_S(X)Y+y)\right|\right]&\leq \EE\left[\phi(h_S(X')Y')\phi(h_S(X)Y)e^{2\hat \tau^i_\lambda}\right]\\
( Z\indep Z' \indep \hat \tau^i_\lambda ) &\leq \risk[h_S]^2 \EE\left[e^{2\hat \tau^i_\lambda}\right].\\
(\text{By Lemma \ref{lemma:exponential-radius-control} with $p=2$})&\leq \risk[h_S]^2 e^{2+\frac{2\alpha M_S}{n}+\frac{4\alpha^2M_S^2}{n-1}}e^{2\alpha\risk[g]}
\end{align*}

Thus the exponential loss verifies Assumption \ref{Ass:3} with $\Psi_1(x)=C_Sx^2 e^{2\alpha x}$ and $C_S=e^{2+\frac{2\alpha M_S}{n}+\frac{4\alpha^2M_S^2}{n-1}}$. Besides, using~\eqref{ineq:main-exponential-ineq} again yields:
\begin{align*}
		\EE\left[\sup_{\abs{y'},\abs{y}\leq \hat \rho^i_\lambda} \left|\phi'(h_S(X)Y+y')\phi'(h_S(X)Y+y)\right|\right]&\leq \EE\left[\phi(h_S(X)Y)^2e^{2\hat \tau^i_\lambda}\right]\\
  (M_S=\sup_{Z\in\mathcal{Z}_T}\loss(h_S,Z))&\leq M_S\EE\left[\phi(h_S(X)Y)e^{2\hat \tau^i_\lambda}\right] \\ 
( Z \indep \hat \tau^i_\lambda ) &\leq M_S\risk[h_S] \EE\left[e^{2\hat \tau^i_\lambda}\right]\\
(\text{By Lemma \ref{lemma:exponential-radius-control} with $p=2$})&\leq M_S\risk[h_S] e^{2+\frac{2\alpha M_S}{n}+\frac{4\alpha^2M_S^2}{n-1}}e^{2\alpha\risk[g]}.
\end{align*}

Therefore the exponential loss verifies Assumption \ref{Ass:4} with $\Psi_2(x)=C_S M_S x e^{2\alpha x}.$
\subsubsection{Logistic} 

Recall the loss function $\phi(x)=\log(1+e^{-x})$ and its derivative:
$$\abs{\phi^\prime(x)}=\ffrac{e^{-x}}{e^{-x}+1}.$$

Thus, we have:
\begin{align*}
	\abs{\phi^\prime(x+y)}&=\ffrac{e^{-x-y}}{e^{-x-y}+1}\\
	&\leq e^{-y} e^{-x}\\
	&= e^{-y}\left(e^{\phi(x)}-1\right)\\
&\leq  e^{\hat \rho_\lambda^i}\left(e^{\phi(x)}-1\right)\\
  &\leq e^{\hat \tau_\lambda^i}\left(e^{\phi(x)}-1\right),
\end{align*}

where the two last inequalities result from the facts that $y\leq \hat \rho_\lambda^i$ and $  \hat \rho_\lambda^i\leq \hat \tau_\lambda^i$ respectively. Using the facts that $\norm{\phi'}_\infty\leq 1$ and $e^{\hat \tau_\lambda^i}\leq 1 $, one obtains:
\begin{align}\label{ineq:log-loss-key}
	\left|\phi'(h_S(X)Y+y)\right|&\leq \min\left(e^{\hat \tau_\lambda^i}\left(e^{\phi\left(h_S(X)Y\right)}-1\right),1\right)\nonumber\\
 &=\min\left(e^{\hat \tau_\lambda^i}\left(e^{\loss(h_S,Z)}-1\right),1\right)\nonumber\\
 &\leq \min\left(e^{\hat \tau_\lambda^i}\left(e^{\loss(h_S,Z)}-1\right),e^{\hat \tau_\lambda^i}\right)\nonumber\\
 &\leq e^{\hat \tau_\lambda^i}\min\left(\left(e^{\loss(h_S,Z)}-1\right),1\right).
\end{align}

The latter inequality yields:
\begin{align*}
\sup_{\abs{y'},\abs{y}\leq \hat \rho^i_\lambda} \left|\phi'(h_S(X')Y'+y')\phi'(h_S(X)Y+y)\right|\leq e^{2\hat \tau_\lambda^i}\min\left(e^{\loss\left(h_S,Z\right)}-1,1\right)\min\left(e^{\loss\left(h_S,Z'\right)}-1,1\right).
\end{align*}
Thus, since $Z, Z'$ are independent of $\DD_T^{\backslash i}$, they are also independent of $\hat \tau^i_\lambda$. It follows: 
 \begin{align}\label{ineq:log-loss-prepare}
 \EE\left[\sup_{\abs{y'},\abs{y}\leq \hat \rho^i_\lambda} \left|\phi'(h_S(X')Y'+y')\phi'(h_S(X)Y+y)\right| \right] &\leq \EE\left[ e^{2\hat \tau_\lambda^i}\right] \EE\left[\min\left(e^{\loss(h_S,Z)}-1,1\right)\right]^2\nonumber\\
(\text{By Lemma \ref{lemma:exponential-radius-control}})& \leq  C_Se^{2\alpha\risk[h_S]}\EE\left[\min\left(e^{\loss(h_S,Z)}-1,1\right)\right]^2.
\end{align}

Now using the fact that:

$$ e^{\loss(h_S,Z)}-1 \leq 1 \implies \loss(h_S,Z)\leq 1 \implies \loss(h_S,Z)\leq \sqrt{\loss(h_S,Z)}, $$

we have:
\begin{equation*}
\EE\left[\min\left(e^{\loss(h_S,Z)}-1,1\right)\right]\leq \EE\left[\min\left(e^{\sqrt{\loss(h_S,Z)}}-1,1\right)\right].
\end{equation*}
In addition, notice that:
\begin{equation*}
	(e^{\sqrt{x}}-1)\wedge 1 =
	\begin{cases}
	 e^{\sqrt{x}}-1 \quad &\text{if}\; x \leq \ln(2)^2,\\
	 1   \quad &\text{otherwise},
	\end{cases}
\end{equation*}
 which is concave. Therefore, it holds:
\begin{equation*}
\EE\left[\min\left(e^{\loss(h_S,Z)}-1,1\right)\right]\leq \min\left(e^{\sqrt{\risk[h_S]}}-1,1\right).
\end{equation*}
To show that the logistic loss verifies Assumption~\ref{Ass:3} with $\Psi_1(x)=C_Se^{2\alpha\risk[h_S]}(e^{\sqrt{x}}-1)^2$, it suffices to plug the latter inequality in \eqref{ineq:log-loss-prepare}. Now, using \eqref{ineq:log-loss-key} again yields:
\begin{align*}
 \EE\left[\sup_{\abs{y'},\abs{y}\leq \hat \rho^i_\lambda} \left|\phi'(h_S(X)Y+y')\phi'(h_S(X)Y+y)\right|\right] &\leq \EE\left[e^{2\hat \tau_\lambda^i}\min\left(e^{\loss(h_S,Z)}-1,1\right)^2\right]\\
 &\leq \EE\left[e^{2\hat \tau_\lambda^i}\right] \EE\left[\min\left(e^{\loss(h_S,Z)}-1,1\right)\right].
\end{align*}
Finally, using the same steps as before, we show that the logistic loss verifies Assumption~\ref{Ass:4} with $\Psi_1(x)=C_Se^{2\alpha\risk[h_S]}(e^{\sqrt{x}}-1)$.

\subsubsection{Softplus} 

The proof is similar to that of the logistic loss and is left for the reader.

\subsection{Proof of Theorem \ref{theo:gen-gap-UB}}\label{subsec:app:gen-gap-UB}

First, notice that:

\begin{align*}
\mathcal{E}_{\textrm{gen}}=\abs{\EE\left[\ER\left[\alg(\DD_T)\right]-\risk\left[\alg (\DD_T)\right]\right]}&=\abs{\EE\left[\frac{1}{n}\sum_{i=1}^{n}\loss\left(\alg\left(\DD_T\right),Z_i\right)-\loss\left(\alg\left(\DD_T^{\backslash i}\right),Z\right)\right]}\\
&=\abs{\EE\left[\loss\left(\alg\left(\DD_T\right),Z_1\right)-\loss\left(\alg\left(\DD_T^{\backslash i}\right),Z\right)\right]}.
\end{align*}

Using triangle inequality and the fact that $Z$ and $Z_1$ have the same distributions, we obtains:

\begin{align*}
    \mathcal{E}_{\textrm{gen}} &\leq \abs{\EE\left[\loss\left(\alg\left(\DD_T\right),Z_1\right)- \loss\left(\alg\left(\DD^{\backslash i}_T\right),Z_1\right)\right]}+\abs{\EE\left[\loss\left(\alg\left(\DD_T^{\backslash i}\right),Z_1\right)-\loss\left(\alg\left(\DD_T^{\backslash i}\right),Z\right)\right]}\\
&= \abs{\EE\left[\loss\left(\alg\left(\DD_T\right),Z_1\right)- \loss\left(\alg\left(\DD^{\backslash i}_T\right),Z_1\right)\right]}+\abs{\EE\left[\loss\left(\alg\left(\DD_T^{\backslash i}\right),Z\right)-\loss\left(\alg\left(\DD_T^{\backslash i}\right),Z\right)\right]}.
\end{align*}

The desired result follows from Propositions~\ref{prop:stability-main} and~\ref{prop:pointwise-stability-main}. 

\subsection{Proof of Theorem \ref{theo:custom-stability-surrogate}}\label{subsec:excess-risk}

First introduce $$h_\lambda=\argmin_{h\in \mathcal{H}}\risk\left[h_S+h\right]+\lambda\norm{h}_k^2, $$ and write
\begin{align*}
	 \risk\left[\alg\right] -\risk[h^*+h_S] &=\risk\left[\alg\right]-\ER\left[\alg\right]
	 +\ER\left[\alg\right] +\lambda\norm{f}_k^2  -\risk[h_\lambda+h_S] +\risk[h_\lambda+g']  -\risk[h^*+h_S].
\end{align*}
Now by rearranging and reminding that: 
$$\ER\left[\alg\right]+\lambda\norm{f}_k^2 \leq \ER\left[h_\lambda+h_S\right]+\lambda\norm{h_\lambda}_k^2,$$

we obtain:
\begin{align*}
	\risk\left[\alg\right] -\risk[h^*+h_S] &\leq\risk\left[\alg\right]-\ER\left[\alg\right] +\ER\left[h_\lambda+h_S\right] -\risk[h_\lambda+h_S]   \\&  \hspace{1cm}+\risk[h_\lambda+h_S]+\lambda\norm{h_\lambda}_k^2-\risk[h^*+h_S].
\end{align*}
For the first term notice that:
\begin{align*}
	\EE\left[\risk\left[\alg\right]-\ER\left[\alg\right]\right]
	&\leq \mathcal{E}_{\mathrm{gen}}\leq \beta(n)+\gamma(n).
\end{align*}
Regarding the second term, since $h_\lambda$ is independent of $\DD_T$ we have: 
$$ \EE\left[\ER\left[h_\lambda+h_S\right]-\risk[h_\lambda+h_S]\right]=0.$$
Finally, notice that by definition of $g_\lambda$ that:
$$\risk[h_\lambda+h_S]+\lambda\norm{h_\lambda}_k^2-\risk[h^*+h_S]\leq \lambda\norm{h^*}_k^2.$$
Combining the latter four inequalities yields:
\begin{equation}\label{ineq:excess-app}
    \mathcal{E}_{\mathrm{ex}}=\risk\left[\alg\right] -\risk[h^*+h_S]\leq \beta(n)+\gamma(n)+\lambda\norm{h^*}_k,
\end{equation}

which concludes the first part. For the second part we shall use Table~\ref{table:property-hypothesis} and the fact that 
\begin{equation}\label{ineq:key-ineq-excess}
\gamma(n)+\beta(n)\leq \alpha\ffrac{\left(\Psi_1\left(\risk\left[h_S\right]\right)+  \Psi_2\left(\risk\left[h_S\right]\right)\right)\wedge \left(2\norm{\phi'}_\infty^2\right)}{ n}.
\end{equation}

\subsubsection{MSE and Squared hinge }

For these two losses,  $\Psi_1(x)=\Psi_2(x)=8x(4\alpha+1)$, so that by inequality \eqref{ineq:key-ineq-excess} we get:
\begin{align*}
\gamma(n)+\beta(n)&\leq  \alpha\ffrac{16\risk\left[h_S\right](4\alpha+1)}{ n}\\
&=\ffrac{16\kappa \risk\left[h_S\right](4\frac{\kappa}{\lambda}+1)}{ \lambda n}.
\end{align*}
Thus for small $\lambda$ one has:  $$\gamma(n)+\beta(n)=\mathcal{O}\left(\frac{\risk[h_S]}{\lambda^2 n}\right).$$

To conclude, set $\lambda=\sqrt{\ffrac{\risk[h_S]}{\sqrt{n}}}$ and use Inequality~\eqref{ineq:excess-app} to obtain:
$$\mathcal{E}_{\textrm{ex}}=\mathcal{O}\left( \sqrt{\ffrac{\risk[h_S]}{\sqrt{n}}} \right).$$

\subsubsection{Exponential}

Using Table~\ref{table:property-hypothesis}, remind that the functions $\Psi_1(x)$ and 
$\Psi_2$ are given by: $\Psi_1(x)=C_Sx^2e^{2\alpha x}$ , $\Psi_2(x)=C_SM_Sxe^{2\alpha x}$ with $M_{S}=\sup_{z\in \mathcal{Z}_T}\loss(h_S,z)$ and

$$C_S=\exp\left\{2 +\frac{2\alpha  M_S}{n}+\frac{4\alpha^2  M_S^2}{n-1}\right\}=\exp\left\{2 +\frac{2\kappa  M_S}{\lambda n}+\frac{4\kappa^2  M_S^2}{\lambda^2(n-1)}\right\}.$$

Assume that $n\geq \max\left(\frac{M_S^2\ln(n)^2} {\risk[h_S]},2\right)$ and $\lambda=4\frac{\sqrt{\risk[h_S]}\wedge 1}{\ln(n)}=4\frac{\sqrt{\risk[h_S]}}{\ln(n)}$. The case where $\risk[h_S]\geq 1$ is similar and thus omitted. Now, write
$$n\geq \frac{M_S^2\ln(n)^2} {\risk[h_S]}= \frac{M_S^2} {\lambda^2}\implies \frac{  M_S^2}{\lambda^2(n-1)}\leq \frac{n}{n-1}\leq 2. $$

The latter condition also implies that $\frac{  M_S}{\lambda n}\leq \frac{\lambda}{M_S}\leq \frac{\sqrt{\risk[h_S]}}{M_S\ln(n)}\leq \frac{1}{2 M_S}$. By these two facts, we deduce that  $C_S$ can be bounded independently of $n$. Thus, using \eqref{ineq:key-ineq-excess} yields:
\begin{align}\label{ineq:last-step-excess}
\gamma(n)+\beta(n)&\leq  \alpha\ffrac{C_S\left(\risk[h_S]^2+M_S\risk[h_S]\right)e^{2\alpha\risk[h_S]}}{ n}\\
&=\ln(n)\ffrac{C_S\left(\risk[h_S]^{3/2}+M_S\risk[h_S]^{1/2}\right)(\sqrt{n})^{\kappa\sqrt{\risk[h_S]}}}{ n}\nonumber\\
&=\mathcal{O}\left(\frac{\sqrt{\risk[h_S]}}{\sqrt{n}}\right),\nonumber
\end{align}

where the two last inequalities follow from the facts that $\alpha=\frac{\kappa}{\lambda}=\frac{\kappa\ln(n)}{\sqrt{\risk[h_S]}}$ and $\kappa \leq 1$. It remains to use the \eqref{ineq:excess-app} to conclude the first part. For the second part,  set $\lambda=\frac{\ln(n)^2}{\sqrt{n}}$ and notice that,
if $n\leq \frac{M_S^2\ln(n)^2} {\risk[h_S]}$ then $\risk[h_S]\leq \frac{M_S^2\ln(n)^2} {n}\leq M_S^2 \lambda$ and $\alpha\risk[h_s] \leq M_S^2 \kappa\leq 1$.  Furthermore, the constant $C_S$ can be bounded independently of $n$ with such a choice of $\lambda$. Inequality~\eqref{ineq:last-step-excess} becomes: 

$$\gamma(n)+\beta(n)=\mathcal{O}\left(\ffrac{\risk[h_S]}{\sqrt{n}\ln(n)^2}\right).$$

It remains to use Inequality \eqref{ineq:excess-app} to complete the proof.

\subsubsection{Logistic }
For this loss, we have $\norm{\phi'}_\infty =1 $ and Inequality \eqref{ineq:key-ineq-excess} becomes:

$$\beta(n)+\gamma(n)\leq \frac{2\alpha}{n}= \frac{2\kappa}{\lambda n}\leq \frac{2}{\lambda n}.$$
Thus, setting $\lambda=\frac{1}{\sqrt{n}}$  and using Inequality \eqref{ineq:excess-app} yields:

$$\mathcal{E}_{\textrm{ex}}=\mathcal{O}\left( \sqrt{\ffrac{1}{n}} \right).$$
Furthermore, if $n\geq 9$ and $\risk[h_S]\leq \frac{1}{\sqrt{n}}\leq \frac{1}{e}$, then with the choice $\lambda= \frac{8}{\sqrt{-n\ln(\risk[h_S])}}$ one has:
$$\mathcal{E}_{\textrm{ex}}=\mathcal{O}\left(\ffrac{1}{ \sqrt{-n\ln\left(\risk[h_S]\right) }}\right).$$
Indeed, in the setting above, it leads to that:
\begin{align*}
e^{\alpha\risk[h_S]}=e^{\frac{\kappa}{\lambda} \risk[h_S]}
&= e^{\frac{\kappa\sqrt{-\ln(\risk[h_s])}}{8}}\\
(\kappa \leq 1)&\leq e^{\frac{\sqrt{-\ln(\risk[h_s])}}{8}}\\
\left(-\ln(\risk[h_S]) \geq 1\right)  &\leq e^{\frac{-\ln(\risk[h_s])}{8}} =\risk[h_S]^{-1/8},
\end{align*}
and
$$e^{\frac{2\kappa  M_S}{\lambda n}}\leq e^{\frac{2}{\lambda n}}=e^{-\frac{\sqrt{\ln(\risk[h_S]})}{4\sqrt{n}}}\leq e^{\sqrt{\frac{\ln(n)}{4n}}}\leq e^{1/
4}.$$
Besides, since $\frac{n}{n-1}\leq 2$, 
$$e^{\frac{4\kappa  M_S}{\lambda^2 n}}\leq e^{\frac{4}{\lambda^2 (n-1)}}=e^{-\frac{\ln(\risk[h_S])(n)}{16(n-1)}}\leq \risk[h_S]^{-1/8}.$$
Moreover, using Inequality \eqref{ineq:key-ineq-excess} and Table \ref{table:property-hypothesis} gives:
\begin{align*}
\gamma(n)+\beta(n)&\leq  \alpha\ffrac{C_Se^{\sqrt{\risk[h_S]}}e^{2\alpha\risk[h_S]}\left(e^{\sqrt{\risk[h_S]}}-1\right)}{ n}\\
&=\kappa\ffrac{\exp\left\{2 +\frac{2\alpha  M_S}{n}+\frac{4\alpha^2  M_S^2}{n-1}+\sqrt{\risk[h_S]}+2\alpha\risk[h_S]\right\}\left(e^{\sqrt{\risk[h_S]}}-1\right)}{\lambda n}\\
&=\mathcal{O}\left(\ffrac{\sqrt{-\ln\left(\risk[h_S]\right)}\left(e^{\sqrt{\risk[h_S]}}-1\right)}{\risk[h_S]^{1/4} \sqrt{n}}\right).
\end{align*}

Now, since the function $e^{\sqrt{x}}-1\leq 2\sqrt{x}$ for all $x\leq \ln(2)^2$ and $\risk[h_S]\leq \frac{1}{n}\leq \frac{1}{3} \leq \ln(2)^2$ the latter inequality becomes:
\begin{align*}
\gamma(n)+\beta(n)=\mathcal{O}\left(\ffrac{\sqrt{-\ln\left(\risk[h_S]\right)}\risk[h_S]^{1/4}}{ \sqrt{n}}\right).
\end{align*}

To conclude the proof notice that, for all $x\leq 1$, we have $\ln(x^{-1/4})\leq x^{-1/4}$ and thus $x^{1/4}(-\ln(x))\leq 4$. This leads to :

\begin{align*}
x^{1/4}\sqrt{-\ln(x)}\leq \frac{4}{\sqrt{-\ln(x)}}.
\end{align*}
Therefore,
\begin{align*}
\gamma(n)+\beta(n)=\mathcal{O}\left(\ffrac{1}{ \sqrt{-n\ln\left(\risk[h_S]\right)}}\right).
\end{align*}

It remains to use Inequality \eqref{ineq:excess-app} to complete the proof.
\subsubsection{Softplus }

For the softplus, the choice $\lambda=1/\sqrt{n}$ yields:
$$\mathcal{E}_{\textrm{ex}}=\mathcal{O}\left( \sqrt{\ffrac{1}{n}} \right).$$
Furthermore, if  $n\geq 9$ and $\risk[h_S]\leq \frac{1}{\sqrt{n}}$ and $\frac{1}{s}\leq -\ln(\risk[h_S]$), then with the choice $\lambda= \frac{8}{\sqrt{-n\ln(\risk[h_S])}}$ , one has:
$$\mathcal{E}_{\textrm{ex}}=\mathcal{O}\left(\ffrac{1}{ \sqrt{-s n\ln\left(\risk[h_S]\right)}}\right).$$

The proof is identical to the previous one and thus omitted.

\end{document}